\documentclass{article}

\usepackage{arxiv}
\usepackage{microtype}
\usepackage{amsmath,amssymb,amsfonts,amsthm}
\usepackage{booktabs}
\usepackage{enumitem}
\usepackage{natbib}
\usepackage{hyperref}
\usepackage{xcolor}
\usepackage{tikz}
\usetikzlibrary{arrows.meta,calc,positioning}

\hypersetup{
  pdftitle={Learning in Infinitesimal Non-Compositional Sketches},
  pdfsubject={Tangent categories, sketches, and infinitesimal learning},
  pdfauthor={Sridhar Mahadevan},
  pdfkeywords={tangent categories, sketches, infinitesimal learning, categorical machine learning, diagrammatic backpropagation, connections, Lie brackets, quotient geometry, coalgebras, Kan extensions, reinforcement learning},
  colorlinks=true,
  linkcolor=blue!55!black,
  citecolor=blue!55!black,
  urlcolor=blue!55!black
}

\newtheorem{definition}{Definition}
\newtheorem{axiom}{Axiom}
\newtheorem{proposition}{Proposition}
\newtheorem{theorem}{Theorem}
\newtheorem{conjecture}{Conjecture}
\newtheorem{remark}{Remark}
\newtheorem{condition}{Condition}

\newcommand{\C}{\mathcal{C}}
\newcommand{\D}{\mathcal{D}}

\newcommand{\R}{\mathbb{R}}
\newcommand{\T}{\mathsf{T}}
\newcommand{\LINCS}{\mathsf{LINCS}}
\newcommand{\INC}{\mathsf{INC}}
\newcommand{\Loss}{\mathcal{L}}
\newcommand{\Ob}{\operatorname{Ob}}
\newcommand{\Hom}{\operatorname{Hom}}

\newcommand{\res}{\operatorname{res}}

\newcommand{\Fact}{\operatorname{Fact}}

\newcommand{\Vect}{\operatorname{Vect}}
\newcommand{\Path}{\operatorname{Path}}
\newcommand{\Cl}{\operatorname{Cl}}

\newcommand{\Obs}{\operatorname{Obs}}
\newcommand{\norm}[1]{\left\lVert #1 \right\rVert}

\title{Learning in Infinitesimal Non-Compositional Sketches}

\author{Sridhar Mahadevan\\
Adobe Research and University of Massachusetts, Amherst\\
\texttt{smahadev@adobe.com, mahadeva@umass.edu}}

\date{\today}

\begin{document}
\maketitle

\begin{abstract}
This paper develops a categorical framework -- \emph{Learning in Infinitesimal Non-Compositional Sketches} (\textsc{Lincs}  ) --  that reframes machine learning (ML)  as the repair of non-compositionality: failures of diagrams to factor through their intended quotient sketches lifted to the tangent category setting. ML problems are specified as sketches (graphs with commutativity conditions $\mathcal D$, limit cones $\mathcal L$, and colimit cocones $\mathcal K$), generalizing the usual scalarization of loss functions or vector space assumptions. Non-compositionality is defined purely as failure of a universal factorization problem, not as arithmetic error between the desired and actual predictions.  The central object is Infinitesimal Non-Compositionality (INC): given a learning sketch $\mathbb S=(S,\mathcal D,\mathcal L,\mathcal K)$, whose underlying graph is $S$, and a model $D:J \rightarrow C$, the base defect is the obstruction to factorization $\mbox{Obs}(\mbox{Fact}_{\mathbb S}(D))$.  The tangent lift applies the tangent functor $T$ to obtain $TD:J \rightarrow C$, and INC is the obstruction $\mbox{Obs}(\mbox{Fact}_{\mathbb S}(TD))$ —asking whether infinitesimal perturbations preserve the compositionality constraints.

The paper also introduces Tangent Learning Sketches, which are sketches equipped with Cockett-Cruttwell tangent structure, ensuring that if $D$ is admissible, so is $TD$. This allows the definition of \textsc{Lincs}   categories where learning data includes the pair $(\mbox{Fact}_{\mathbb S}(D),\mbox{Fact}_{\mathbb S}(TD))$. Beyond tangent factorization, the framework admits sketch-specific interaction enrichments.  Lie-bracket closure is the intrinsic antisymmetric specialization; when a connection is declared, the full second-order jet also contains connection-dependent symmetric acceleration.  Parameterized realizations must descend through presentation redundancies, and computational tangent signals are admitted contingently rather than assigned mandatory weight.  The paper defines the INC endofunctor $T_{\mbox{INC}}$, which iterates the tangent lift, producing a tower $D,TD,T^2D, \cdots$ of factorization problems. ML is thereby formulated as the search for a coalgebraic fixed point where successive tangent unfoldings stabilize ($\nu T_{\mbox{INC}}$).  Using the Aczel--Mendler theorem, we prove existence of a final INC coalgebra whenever $T_{\mbox{INC}}$ admits a set-based class realization that creates its final carrier.  Barr's theorem yields a set-sized alternative for accessible realizations and a regular-cardinal bound on the final carrier.  For complete metric realizations in which $T_{\mbox{INC}}$ is contractive, we also prove existence and uniqueness of the stabilized INC behavior, geometric convergence of the exact tower, and a finite-error bound for approximate unfoldings.  A detailed experimental evaluation of \textsc{Lincs} is underway in a number of concrete ML settings, including deep learning, large language models, and reinforcement learning, and is described in companion papers. 
\end{abstract}

\keywords{Tangent categories \and sketches \and infinitesimal learning \and categorical machine learning \and diagrammatic backpropagation \and connections \and Lie brackets \and quotient geometry \and coalgebras \and Kan extensions \and reinforcement learning}

\newpage 

\tableofcontents 

\section{Introduction}

The dominant language of machine learning (ML) is optimization: choose a loss function and minimize it  \citep{hastie2009elements,murphy2012machine,SuttonBarto2018}. 
This language is powerful, but it hides a common structure shared across many learning objectives. A loss often {\em scalarizes} a failure of composition. A supervised loss scalarizes a failure of a predictor to compose input with target behavior. A Bellman residual scalarizes a failure of a value function to commute with the Bellman operator. A contrastive loss scalarizes failure of representation invariance across views. A sheaf loss scalarizes failure of local sections to glue. Diagrammatic Backpropagation (DB) made this viewpoint explicit: learning can be driven by failures of diagrammatic compositionality \citep{mahadevan2026catagi}.  A DB loss scalarizes failure of a diagram of computations to commute.

The paradigmatic Transformer model used in large language models (LLMs) provides a suggestive example.  After embedding, a length-$m$
token sequence is represented as a point of a Euclidean array, and the
content-only Transformer map $F$ is permutation equivariant: for a permutation
matrix $P$, the intended sketch contains the commuting condition $F(PX)=P F(X)$. Thus ``scramble the input, then apply the model'' and ``apply the model, then
scramble the output'' are two paths that should agree, and an equivariance loss
scalarizes their failure to compose.  This statement deliberately factors out
positional encodings, causal masks, and other order-dependent structure, which
break or refine the permutation symmetry in an implemented language model.
From the \textsc{Lincs}   viewpoint the scalar loss is only the base-level shadow: one can
also ask whether this equivariance square remains coherent under perturbations
of embeddings, attention maps, and parameters, and whether new obstructions
appear under the iterated lifts $\T^2,\T^3,\ldots$.  This suggests that \textsc{Lincs}  
may expose structural properties of language models that are invisible to the
ordinary scalar objective.
Section~\ref{sec:transformer-equivariance} answers this question explicitly:
exact smooth equivariance propagates through every tangent order, whereas a
nonzero higher obstruction localizes where approximate or deliberately broken
equivariance fails to survive perturbation. 

\begin{figure}[t]
\centering
\begin{tikzpicture}[
  font=\small,
  object/.style={draw=black!65, rounded corners=2pt, fill=black!3,
                 minimum width=12mm, minimum height=7mm, inner sep=2pt},
  map/.style={-{Latex[length=2.2mm]}, semithick},
  candidate/.style={-{Latex[length=2.2mm]}, semithick, dashed, black!55},
  lift/.style={-{Latex[length=3mm]}, very thick, blue!55!black},
  obstruction/.style={draw=red!70!black, fill=red!7, rounded corners=2pt,
                      text=red!65!black, inner sep=3pt, align=center},
  paneltitle/.style={font=\small\bfseries, text=black!75}
]
  \node[paneltitle] (base-title) at (0,1.75) {Learning compositionality problem};
  \node[object] (j0) at (-1.45,.65) {$J$};
  \node[object] (c0) at (1.45,.65) {$\C$};
  \node[object, minimum width=21mm] (q0) at (-1.45,-.95)
    {$J/{\sim_{\mathcal D}}$};
  \draw[map] (j0) -- node[above] {$D$} (c0);
  \draw[map] (j0) -- node[left] {$q_{\mathcal D}$} (q0);
  \draw[candidate] (q0) -- node[below right, text=black!55] {$\bar D\;?$} (c0);
  \node[obstruction] (obs0) at (1.75,-1.22)
    {$\Obs\!\left(\Fact_{\mathbb S}(D)\right)$};
  \draw[red!65!black, line width=.8pt]
    ($(q0)!.58!(c0)+(-.08,.08)$) -- ($(q0)!.58!(c0)+(.08,-.08)$);
  \draw[red!65!black, line width=.8pt]
    ($(q0)!.58!(c0)+(-.08,-.08)$) -- ($(q0)!.58!(c0)+(.08,.08)$);

  \draw[lift] (2.65,.05) -- node[above, align=center, text=blue!55!black]
    {$\T$\\[-1pt]{\scriptsize tangent lift}} (4.15,.05);

  \node[paneltitle] (tan-title) at (6.8,1.75) {Infinitesimal compositionality problem};
  \node[object] (j1) at (5.35,.65) {$J$};
  \node[object] (c1) at (8.25,.65) {$\C$};
  \node[object, minimum width=21mm] (q1) at (5.35,-.95)
    {$J/{\sim_{\mathcal D}}$};
  \draw[map, blue!55!black] (j1) -- node[above, text=blue!55!black] {$\T D$} (c1);
  \draw[map] (j1) -- node[left] {$q_{\mathcal D}$} (q1);
  \draw[candidate] (q1) -- node[below right, text=black!55] {$\overline{\T D}\;?$} (c1);
  \node[obstruction, font=\small\bfseries] (obs1) at (9.05,-1.22)
    {$\INC(D)=\Obs\!\left(\Fact_{\mathbb S}(\T D)\right)$};
  \draw[red!65!black, line width=.8pt]
    ($(q1)!.58!(c1)+(-.08,.08)$) -- ($(q1)!.58!(c1)+(.08,-.08)$);
  \draw[red!65!black, line width=.8pt]
    ($(q1)!.58!(c1)+(-.08,-.08)$) -- ($(q1)!.58!(c1)+(.08,.08)$);

  \draw[-{Latex[length=1.8mm]}, red!55!black, densely dotted, semithick]
    (obs0.east) to[bend right=18]
    node[below, text=red!55!black, font=\scriptsize] {transport the defect}
    (obs1.west);
\end{tikzpicture}
\caption{Every learning compositionality problem has a tangent lift.  At
left, a candidate model $D$ may fail to factor through the quotient specified
by the learning sketch $\mathbb S$.  Applying the tangent functor produces the
corresponding factorization problem for $\T D$ without first scalarizing the
base defect.  The obstruction to this lifted factorization is infinitesimal
non-compositionality, $\INC(D)$.}
\label{fig:tangent-lift-inc}
\end{figure}

Using this paradigmatic example as a theme,  \textsc{Lincs}   explores the result of formulating a wide range of  ML
problems, from deep learning to large language models and reinforcement learning, as a categorical compositionality problem lifted by the tangent functor. We can therefore ask not only whether a
base factorization problem is solved, but whether its infinitesimal transport
is coherent. The central thesis is that every learning compositionality problem has a tangent lift.  Figure~\ref{fig:tangent-lift-inc} depicts this passage at the level of the factorization problem itself. The lift defines a new learning signal -- Infinitesimal Non-Compositionality (INC). The usual ML formulation asks: is the scalarized defect small? \textsc{Lincs}   asks: is the defect small, and does its tangent factorization behave correctly?  This paper develops \textsc{Lincs}   as a mathematical framework and illustrates the framework in a number of practical settings, from diagrammatic backpropagation in deep learning \citep{mahadevan2026catagi}, Lie-algebraic LoRA adapters (\textsc{Allora}) for fine-tuning Transformer models \citep{mahadevan2026allora}, Lie-algebroidal skill optimization methods (\textsc{Lasko}) \citep{mahadevan2026lasko}, and latent-confounded causal discovery using Lie-brackets \citep{mahadevan2026bridge}.

\paragraph{Contributions.}
The main contributions of this paper include the following: 
\begin{enumerate}[leftmargin=2em]
  \item It defines Infinitesimal Non-Compositionality (INC) as the tangent lift of diagrammatic factorization failure.
  \item It formulates obstruction localization, a functorial decomposition of global non-compositionality into compatible local learning signals, and shows that this decomposition is preserved by tangent lift.
  \item It formulates \textsc{Lincs}   objects, \textsc{Lincs}   morphisms, and INC categories using tangent-category structure.
  \item It introduces interaction-enriched INC: bracket closure is the intrinsic antisymmetric specialization, while connection-equipped sketches may also retain symmetric acceleration and full second-order jet data.
  \item It distinguishes the categorical tangent lift from its parameterized realization by requiring optimizer fields to descend through presentation quotients or to be supplied with an equivariant horizontal lift.
  \item It gives a coalgebraic view of \textsc{Lincs}  , proves existence of final INC coalgebras under set-based class or accessible set semantics, bounds the set-sized carrier at a regular cardinal, and proves geometric convergence for contractive metric realizations.
  \item It proposes functoriality and universal-completion principles for \textsc{Lincs}   categories.
  \item It relates \textsc{Lincs}   to Kan-invariant learning, tangent categories, infinitesimal causality, Lie-bracket learning, and homotopical repair.
  \item It sketches how GIRL, Differential KET, Diagrammatic Backpropagation, causal Lie-bracket learning, ALLORA, and LASKO instantiate the same core axioms with domain-specific interaction signatures and contingent computational admission.
\end{enumerate}

\section{Related Work}

\paragraph{Sketches and categorical theories.}
The use of sketches to present structured theories goes back to Ehresmann's work on sketches and algebraic structures \citep{ehresmann1968esquisses}. Sketches were later developed as a flexible categorical language for theories and models, notably in the work of Barr and Wells \citep{barr1999category}, Makkai and Par{\'e} \citep{makkai1989accessible}, and Ad{\'a}mek and Rosick{\'y} \citep{adamek1994locally}. Following the Makkai-Par{\'e} presentation, we regard a sketch as a quadruple \((S,\mathcal D,\mathcal L,\mathcal K)\), where \(S\) is an underlying graph, \(\mathcal D\) is a class of commutativity conditions, \(\mathcal L\) is a class of distinguished cones, and \(\mathcal K\) is a class of distinguished cocones. \textsc{Lincs}   uses sketches in this spirit: a learning problem is specified by a graph of formal computations together with the commutative, limiting, and colimiting constraints that a strict learning model would satisfy.

\paragraph{Accessible categories.}
Accessible-category theory provides a natural size-controlled setting for this
sketch semantics.  Under standard smallness hypotheses, many categories of
models presented by sketches are accessible or locally presentable; conversely,
accessible categories admit presentation results in categorical model theory
\citep{makkai1989accessible,adamek1994locally}.  This connection is relevant to
\textsc{Lincs}   because accessibility supplies set-sized families of presentable models
and closure under filtered colimits, offering a plausible technical setting for
constructing categories of learning models and studying tangent or \textsc{Lincs}  
completions without uncontrolled size growth.

\paragraph{Tangent categories.}
Rosick{\'y} introduced abstract tangent functors, and Cockett and Cruttwell developed tangent categories as an axiomatic account of tangent bundle structure \citep{rosicky1984abstract,cockett2014tangent}. The subsequent theory of differential bundles, tangent fibrations, and connections provides a categorical setting in which vector fields, curvature, and transport can be studied without choosing coordinates \citep{cockett2016differential,cockett2017connections}. \textsc{Lincs}   builds directly on this tangent-categorical substrate by treating the Cockett-Cruttwell structure itself as sketch-presentable data acting on learning sketches and their factorization problems.

\paragraph{Weil-algebra semantics for tangent structure.}
Leung gives a functorial classification of the Cockett--Cruttwell axioms in
terms of Weil algebras \citep[Thm.~14.1]{leung2017classifying}.  If $\mathcal
M$ is a category, specifying tangent structure on $\mathcal M$ is equivalent,
up to isomorphism, to specifying a strong monoidal functor
\[
F:(\mathbb N\text{-}\mathsf{Weil}_1,\otimes,\mathbb N)
  \longrightarrow
  (\operatorname{End}(\mathcal M),\circ,1_{\mathcal M})
\]
that preserves the foundational pullbacks and the equalizer encoding the
universality of vertical lift.  The dual-number algebra
$W=\mathbb N[x]/(x^2)$ is sent to the tangent functor $T=F(W)$, while the
structural maps of $W$ are sent to the projection, zero, addition, vertical
lift, and canonical flip.  Thus $\mathbb N\text{-}\mathsf{Weil}_1$ acts as a
classifying theory, or initial tangent structure, rather than merely providing
examples of tangent functors.  This result supplies the precise functorial
semantics behind the tangent action used by \textsc{Lincs}; the additional
\textsc{Lincs} data are the learning sketches, their admissible models, and the
factorization obstructions transported by that action.

\paragraph{Functorial semantics of Lie theory.}
MacAdam's thesis reconnects two strands of Ehresmann's work---sketch theory
and many-object Lie theory---through tangent categories
\citep{macadam2022functorial}.  It presents Lie algebroids by
\emph{involution algebroids}, a tangent-categorical sketch whose models in
smooth manifolds form the category of Lie algebroids, and then identifies
involution algebroids with suitably exact tangent functors out of the
classifying category $\mathsf{Weil}_1$.  It also places Lie differentiation in
a nerve--realization context induced by an infinitesimal groupoid object.
This is directly relevant to \textsc{Lincs}: it supplies a rigorous precedent
for both sketch-presented tangent structure and Lie-algebroidal semantics.
The distinction is that MacAdam classifies infinitesimal geometric structure,
whereas \textsc{Lincs} studies the obstruction to a learning model and its
tangent lift satisfying a chosen sketch.

\paragraph{Compositional learning.}
Several lines of work have used category theory to describe learning compositionally. Backpropagation has been formulated functorially \citep{fong2019backprop}, Universal Decision Learners formulate reinforcement learning and function approximation in terms of Kan-invariant extensions \citep{mahadevan2026udl}, and Kan Extension Transformers use Kan-extension structure to unify attention, diffusion-style completion, and predict-detach self-conditioning \citep{mahadevan2026ket}. Diagrammatic Backpropagation treats noncommuting diagrams as learning signals \citep{mahadevan2026catagi}. \textsc{Lincs}   extends this viewpoint by studying not only base non-compositionality, but also tangent obstructions, sketch-specific interaction profiles and closure, coalgebraic stabilization, and homotopical repair.

\paragraph{Kan Extensions and Universal Decision Learners:}
Universal Decision Learners (UDL) formulate decision learning in terms of Kan-invariant extensions of local decision data \citep{mahadevan2026udl}. In this view, a learner extends partial information along a functor and seeks invariance under a universal extension. Bellman equations, policy evaluation, and function approximation become forms of local-to-global coherence. \textsc{Lincs}   adds a tangent refinement:
\[
\text{Kan invariance in the base category}
\quad\leadsto\quad
\text{Kan invariance in the tangent category}.
\]
If a learned extension is a left or right Kan extension in \(\C\), then \textsc{Lincs}   asks whether its tangent lift satisfies the corresponding universal property in \(\T\C\). This suggests a combined principle: learn by universal extension, then require infinitesimal coherence of the extension.
In this sense, \textsc{Lincs}   can be viewed as tangent Kan-invariant learning.

\section{Learning Sketches and Tangent Learning Sketches}
We introduce the core notions of learning sketches and tangent learning sketches in this section.  Let \(\C\) be a category whose objects are states, representations, hypotheses, local models, sections, policies, or world-model fragments. Its morphisms are computations, transitions, encoders, decoders, interventions, update rules, or gluing maps. A diagram \(D:J\to\C\) expresses a desired compositional relationship. To motivate the sketch-based formalism, note that in a bare category, an arithmetic loss function cannot be encoded by an expression such as \(g\circ f-f\circ g\). \textsc{Lincs}   translates these numerical losses into a categorical notion of non-compositionality,  without assuming additive, metric, Hilbert, or vector-space enrichment.

We begin with a sketch-theoretic formulation. Let
\[
\mathbb S=(S,\mathcal D,\mathcal L,\mathcal K)
\]
be a sketch in the sense of Makkai and Par{\'e}: \(S\) is a graph, \(\mathcal D\) is a class of commutativity conditions in \(S\), \(\mathcal L\) is a class of cones, and \(\mathcal K\) is a class of cocones. Write \(J=\Path(S)\) for the free category on \(S\). The commutativity data \(\mathcal D\) generate a congruence \(\sim_{\mathcal D}\) on the morphisms of \(J\), and hence a quotient functor
\[
q_{\mathcal D}:J\to J/{\sim_{\mathcal D}}.
\]
The cone and cocone data specify the universal constraints a strict model is expected to realize. A diagram \(D:J\to\C\) satisfies the commutativity part of the sketch precisely when it factors through \(q_{\mathcal D}\):
\[
\begin{array}{ccc}
J & \xrightarrow{D} & \C\\
\downarrow q_{\mathcal D} & \nearrow_{\bar D}\\
J/{\sim_{\mathcal D}}
\end{array}
\qquad
D=\bar D q_{\mathcal D}.
\]
Thus non-compositionality is not an arithmetic difference. It is the failure of a universal factorization problem, enriched by the relevant cone and cocone constraints, to have a solution.

\begin{definition}[Learning sketch]
A learning sketch is a quadruple
\[
\mathbb S=(S,\mathcal D,\mathcal L,\mathcal K),
\]
where \(S\) is a graph of formal learning operations, \(\mathcal D\) is a class of path-equations expressing intended commutativity, \(\mathcal L\) is a class of cones expressing limiting constraints, and \(\mathcal K\) is a class of cocones expressing colimiting constraints. If \(J=\Path(S)\), the equations in \(\mathcal D\) generate a quotient \(q_{\mathcal D}:J\to J/{\sim_{\mathcal D}}\).
\end{definition}

The passage from categories to sketches gains expressive power but weakens
automatic transport.  Spivak makes this tradeoff explicit for database
schemas: specified limit and colimit cones can express constraints such as one
table being the product of two others, but the resulting sketch formalism no
longer inherits the full collection of built-in data-migration functors
available for unconstrained categorical schemas \citep{spivak2014lifting}.
The same issue matters for \textsc{Lincs}  .  A map of the underlying learning graphs need
not preserve the designated cones, cocones, admissible models, or their repair
spaces.  Accordingly, transport between learning sketches is additional
structure: a \textsc{Lincs}   morphism or change of sketch must state which constraints
and factorization problems it preserves.  This is why the later functoriality
results impose preservation of learning sketches and quotient factorization
problems rather than deriving it from an arbitrary schema map.

There is also a limitation at the level of presentation itself.  Barr and
Wells show that not every natural category of structured objects and
structure-preserving morphisms is the category of Set-valued models of an
ordinary sketch: for example, certain categories defined by preservation of
subinitial objects, and groups with center-preserving homomorphisms, are not
sketchable in this sense \citep{barr1992limitations}.  They also indicate how
higher-order sketches can recover some structures beyond this boundary.  Thus
the \textsc{Lincs}   definitions are relative to a chosen sketch-presentable class of
learning models and morphisms.  When admissibility or preservation conditions
quantify over structure that an ordinary sketch cannot present, a higher-order,
enriched, or fibrational replacement may be required.  The tangent action on
factorization problems is already formulated in a way that permits such a
refinement.

\begin{definition}[Model of a learning sketch]
Given a category \(\C\), a candidate model of a learning sketch \(\mathbb S\) in \(\C\) is a functor \(D:J\to\C\), where \(J=\Path(S)\). It is strict, or compositional, when there exists a functor \(\bar D:J/{\sim_{\mathcal D}}\to\C\) such that \(D=\bar Dq_{\mathcal D}\), and the images of the distinguished cones and cocones satisfy the universal requirements specified by \(\mathcal L\) and \(\mathcal K\).
\end{definition}

\begin{definition}[Non-compositionality as factorization failure]
The non-compositionality of a model \(D:J\to\C\) of a learning sketch is the factorization problem
\[
\Fact_{\mathbb S}(D)=\{\bar D:J/{\sim_{\mathcal D}}\to\C\mid D=\bar Dq_{\mathcal D}\text{ and }\bar D\text{ realizes }\mathcal L,\mathcal K\}.
\]
The model is compositional exactly when \(\Fact_{\mathbb S}(D)\) is inhabited. When only the commutativity component is under discussion, we write this as \(\Fact_{q_{\mathcal D}}(D)\).
\end{definition}

\begin{definition}[Tangent sketch]
A tangent sketch is a sketch
\[
\mathbb T=(S_T,\mathcal D_T,\mathcal L_T,\mathcal K_T)
\]
whose graph contains formal operations for a tangent endofunctor \(T\), projection, zero, addition, vertical lift, and canonical flip, and whose commutativity, cone, and cocone data present the Cockett-Cruttwell tangent-category axioms. A model of \(\mathbb T\) in a category \(\C\) is precisely a choice of tangent-category structure on \(\C\), up to the level of strictness encoded by the sketch.
\end{definition}

\begin{definition}[Tangent learning sketch]
A tangent learning sketch is a learning sketch \(\mathbb S=(S,\mathcal D,\mathcal L,\mathcal K)\) equipped with an action of a tangent sketch on its factorization problem. Concretely, if \(J=\Path(S)\) and \(D:J\to\C\) is admissible in a model \((\C,\T)\) of the tangent sketch, then \(\T D:J\to\C\) is again admissible and the base factorization problem
\[
\Fact_{\mathbb S}(D)
\]
is transported to the tangent factorization problem
\[
\Fact_{\mathbb S}(\T D).
\]
Thus a tangent learning sketch is not merely a learning sketch interpreted in a tangent category; it is a sketch whose intended learning constraints are stable under the Cockett-Cruttwell tangent structure.
\end{definition}

\begin{remark}[Infinitesimal objects and representable tangent semantics]
Leung's classification identifies general tangent structure on $\C$ with a
strong monoidal functor from $\mathbb N\text{-}\mathsf{Weil}_1$ to
$\operatorname{End}(\C)$ preserving the designated foundational pullbacks and
vertical-lift equalizer \citep[Thm.~14.1]{leung2017classifying}.  MacAdam gives
a complementary representable realization of this functorial semantics and
uses it in the semantics of Lie theory \citep{macadam2022functorial}.
Following Cockett and Cruttwell, an
infinitesimal object in a symmetric monoidal category is an object $\mathbb D$
equipped with zero, augmentation, multiplication, and coaddition maps satisfying
the stated (co)universality axioms.  Equivalently, in the symmetric monoidal
closed setting it determines a strict symmetric monoidal functor
\[
\mathbb D^{(-)}:\mathsf{Weil}_1\longrightarrow\C.
\]
It induces the representable tangent functor $T=[\mathbb D,-]$ on $\C$ and the
dual tangent functor $T=\mathbb D\otimes(-)$ on $\C^{\mathrm{op}}$.  More
generally, tangent structure on $\C$ is equivalently expressed by a
$\mathsf{Weil}_1$ sketch action
\[
\mathsf{Weil}_1\times\C\longrightarrow\C.
\]
Consequently, a representable \textsc{Lincs} model admits a concrete reading:
$TD$ probes the learning diagram $D$ by the infinitesimal object $\mathbb D$,
and $\operatorname{INC}(D)$ asks whether this infinitesimal probe still
realizes the factorization, limit, and colimit constraints of the learning
sketch.  MacAdam's infinitesimal object is therefore not itself an INC
obstruction; it represents the tangent structure with which the obstruction is
formed.
\end{remark}

\begin{remark}[Lie-algebroidal realization of a tangent learning sketch]
When a tangent learning sketch is realized over smooth state or parameter
spaces, it can admit a more structured, Lie-algebroidal interpretation.
MacAdam makes this presentation precise: involution algebroids form a
tangent-categorical sketch, and their category in smooth manifolds is
isomorphic to the category of Lie algebroids
\citep{macadam2022functorial}.  For
each sketch object $x$, let $A_x\to M_x$ be a Lie algebroid over the realized
object $M_x=D(x)$, with anchor
\[
\rho_x:A_x\longrightarrow TM_x.
\]
The bundle $A_x$ represents the admissible infinitesimal learning operations,
whereas the anchor sends them to the actual tangent directions along which the
realized model can change.  A sketch arrow $f:x\to y$ is equipped, when
defined, with an algebroid transport $A_f:A_x\to A_y$ over $D(f)$ satisfying
the anchor square
\[
\begin{array}{ccc}
A_x & \xrightarrow{A_f} & A_y\\
\downarrow\rho_x && \downarrow\rho_y\\
TM_x & \xrightarrow{T(Df)} & TM_y.
\end{array}
\qquad
T(Df)\rho_x=\rho_yA_f.
\]
Thus the path equations of the learning sketch constrain not only the base
maps $D(f)$ but also their induced transports of admissible infinitesimal
operations.  For two parallel sketch paths, compositionality requires the
corresponding anchored transports to agree; tangent non-compositionality
measures failure of this compatibility after passage through the anchors.
The Lie-algebroid identity
\[
\rho_x([s,t]_{A_x})=[\rho_x(s),\rho_x(t)]
\]
then relates bracket closure of abstract learning operations to closure of
their realized vector fields.  This is an additional realization of a tangent
learning sketch, not a claim that every tangent sketch canonically carries a
Lie algebroid.  MacAdam's result provides the appropriate sketch semantics for
this additional structure; the anchor and bracket conditions below specify how
that structure acts on a particular learning diagram.  It connects \textsc{Lincs}   directly to LASKO, where skill edits are
modeled as anchored sections and order-sensitive skill interactions are
detected by their Lie brackets
\citep{macadam2022functorial,mahadevan2026lasko}.
\end{remark}

A failed factorization may indicate a hidden variable, a missing morphism, a defective representation, an unmodeled constraint, a failure of gluing, or a missing causal direction. The learning problem is not merely to reduce error, but to repair structure so that a universal property becomes closer to being satisfied.

\section{Examples of Infinitesimal Non-Compositionality}
\label{sec:inc-examples}

To make the \textsc{Lincs}   formalization more concrete, we now list several concrete instances of the \textsc{Lincs}   pattern. In each case, the base learning system has a compositionality defect, and the infinitesimal version asks whether that defect remains controlled after tangent lift. We spell out the first example in detail, since it is the running template for the later architectures.

\paragraph{Transformer equivariance.}\label{sec:transformer-equivariance}
The question posed in the introduction has a precise answer.  Let
$F_\theta:\mathbb R^{m\times d}\to\mathbb R^{m\times e}$ be a smooth
content-only Transformer and let a permutation matrix $P$ act on the token
axis.  Define the base equivariance defect
\[
E_{P,\theta}(X)=F_\theta(PX)-PF_\theta(X).
\]
This is the additive scalarizable representative of the obstruction to the
equivariance square.  Its first tangent lift, evaluated on an input
perturbation $V$, is
\[
D E_{P,\theta}(X)[V]
  =D F_\theta(PX)[PV]-P D F_\theta(X)[V].
\]
If parameters are also perturbed by $\dot\theta$, the full tangent defect adds
\[
\partial_\theta F_\theta(PX)[\dot\theta]
  -P\partial_\theta F_\theta(X)[\dot\theta].
\]
Consequently, tangent INC tests whether the local sensitivity of the
Transformer intertwines the permutation action, rather than merely whether
the two finite outputs happen to agree.

More generally, the $n$th input-level prolongation is represented by
\[
D^nF_\theta(PX)[PV_1,\ldots,PV_n]
  -P D^nF_\theta(X)[V_1,\ldots,V_n].
\]
Hence, if $F_\theta(PX)=PF_\theta(X)$ holds identically on a smooth invariant
domain, differentiating the identity shows that every displayed tangent and
higher-order obstruction vanishes.  The iterated \textsc{Lincs}   question is therefore
not mysterious in the ideal equivariant model: exact equivariance is inherited
by all tangent orders.  It becomes informative for learned or approximate
equivariance, restricted data support, finite-precision implementations, and
architectures containing positional encodings, causal masks, routing, or
other order-dependent components.  In those cases the base defect measures
output-level symmetry breaking, the first lift measures its sensitivity, and
higher lifts distinguish curvature and interaction effects that a small base
loss can conceal.  Thus the INC tower provides a graded diagnostic of where
and at what differential order a language model departs from its intended
equivariance sketch.

\paragraph{GIRL.}
Gradient Infinitesimal Reinforcement Learning (GIRL) is the \textsc{Lincs}   lift of gradient temporal-difference learning. Its base learning problem is policy evaluation in a Markov reward process: a value representation should be compatible with the Bellman update, and the tangent learning problem asks whether infinitesimal perturbations of states, values, parameters, or rewards preserve that compatibility.

Let \(\mathcal X\) denote the state object of an MRP, \(P^\pi:\mathcal X\to\mathcal X\) the transition morphism under policy \(\pi\), \(V_\theta:\mathcal X\to\R\) a parameterized value representation, and
\[
B_\pi:\R\to\R,
\qquad
B_\pi(v)=r+\gamma v,
\]
the scalar Bellman update induced by the reward and discount. The GIRL base square is
\[
\begin{array}{ccc}
\mathcal X & \xrightarrow{\ P^\pi\ } & \mathcal X\\
\downarrow V_\theta && \downarrow V_\theta\\
\R & \xrightarrow{\ B_\pi\ } & \R .
\end{array}
\]

As a learning sketch, GIRL is the quadruple
\[
\mathbb S_{\mathrm{GIRL}}
  =
  (S_{\mathrm{GIRL}},
   \mathcal D_{\mathrm{GIRL}},
   \mathcal L_{\mathrm{GIRL}},
   \mathcal K_{\mathrm{GIRL}}).
\]
The graph \(S_{\mathrm{GIRL}}\) has two formal objects \(x\) and \(v\), standing for states and values, and generating arrows
\[
p:x\to x,
\qquad
u:x\to v,
\qquad
b:v\to v.
\]
Under a model \(D_\theta:\Path(S_{\mathrm{GIRL}})\to\C\), these are interpreted as
\[
D_\theta(x)=\mathcal X,\quad
D_\theta(v)=\R,\quad
D_\theta(p)=P^\pi,\quad
D_\theta(u)=V_\theta,\quad
D_\theta(b)=B_\pi.
\]
The commutativity component is the single Bellman equation
\[
\mathcal D_{\mathrm{GIRL}}
  =
  \{\,u\circ p \sim b\circ u\,\}.
\]
Equivalently, the quotient \(q_{\mathcal D}:\Path(S_{\mathrm{GIRL}})\to
\Path(S_{\mathrm{GIRL}})/{\sim_{\mathcal D}}\) identifies the two paths
\[
x\xrightarrow{p}x\xrightarrow{u}v,
\qquad
x\xrightarrow{u}v\xrightarrow{b}v.
\]
The limiting component \(\mathcal L_{\mathrm{GIRL}}\) records the admissible feature/value representation structure, for example the finite-dimensional parameter or feature cone through which \(V_\theta\) is required to factor in linear TD. The colimiting component \(\mathcal K_{\mathrm{GIRL}}\) records the sampling or empirical aggregation structure used to form the TD estimating equations, such as the finite-sample cocone that aggregates transition observations into the stochastic approximation objective. In the minimal Bellman sketch these two classes may be empty; in GTD and GTD-MP they carry the extra representation and empirical averaging structure needed by the saddle objective.

The base non-compositionality is therefore not the arithmetic Bellman residual itself. It is the obstruction
\[
\Obs\big(\Fact_{\mathbb S_{\mathrm{GIRL}}}(D_\theta)\big)
\]
to realizing the Bellman sketch. A scalar TD, GTD, or GTD-MP loss is obtained only after applying a scalarization to this obstruction.

The corresponding tangent learning sketch equips \(\mathbb S_{\mathrm{GIRL}}\) with the Cockett-Cruttwell tangent action. Its model sends \(D_\theta\) to
\[
\T D_\theta:\Path(S_{\mathrm{GIRL}})\to\C,
\]
whose interpreted arrows are \(\T P^\pi\), \(\T V_\theta\), and \(\T B_\pi\). The tangent Bellman square is
\[
\begin{array}{ccc}
\T\mathcal X & \xrightarrow{\ \T P^\pi\ } & \T\mathcal X\\
\downarrow \T V_\theta && \downarrow \T V_\theta\\
\T\R & \xrightarrow{\ \T B_\pi\ } & \T\R .
\end{array}
\]
The GIRL infinitesimal non-compositionality is
\[
\INC_{\mathrm{GIRL}}(D_\theta)
  =
  \Obs\big(\Fact_{\mathbb S_{\mathrm{GIRL}}}(\T D_\theta)\big).
\]
After scalarization, this yields the tangent Bellman error developed in the
companion GIRL paper \citep{mahadevan2026girl}.  At the categorical level considered here, GIRL augments
the base Bellman factorization problem with an infinitesimal Bellman-closure
term; the concrete LINCS-GTD-MP saddle formulation is introduced in that
companion work.

\begin{center}
\begin{tabular}{ll}
\toprule
Problem class & Representative obstruction or loss\\
\midrule
GIRL~\citep{mahadevan2026girl} & Bellman and tangent Bellman loss\\
DB~\citep{mahadevan2026catagi} & compositional backpropagation and tangent diagram loss\\
BRIDGE/SKFM/IC~\citep{mahadevan2026bridge} & causal Lie-bracket closure loss\\
ALLORA~\citep{mahadevan2026allora} & LoRA matrix commutator and quotient interaction profiles\\
LASKO~\citep{mahadevan2026lasko} & Markdown-category skill-composition loss\\
\bottomrule
\end{tabular}
\end{center}

A detailed experimental evaluation of \textsc{Lincs} across these problem
classes---GIRL, DB, BRIDGE/SKFM/IC, ALLORA, and LASKO---is currently underway
and will be reported in future work.

\paragraph{GIKET.}
Kan Extension Transformers learn by extending local neighborhoods into global representations \citep{mahadevan2026ket}. The base defect is a failure of neighborhood extension or Kan invariance; the INC lift asks whether the extension varies coherently under infinitesimal perturbations of tokens, embeddings, or neighborhoods.

\paragraph{Differential DB.}
Diagrammatic Backpropagation repairs noncommuting computational diagrams, and is the canonical example from which \textsc{Lincs}   grows. A useful concrete instance is Sudoku solving. A candidate Sudoku grid determines many local views: its rows, columns, blocks, and individual cells. A correct solution is not merely a vector of predicted digits; it is a diagram whose overlapping local views agree and whose rows, columns, and blocks satisfy the Sudoku constraints.

Let \(X\) be a formal object for a candidate grid, \(R_i\) the \(i\)-th row object, \(K_j\) the \(j\)-th column object, \(B_b\) the \(b\)-th block object, and \(C_{ij}\) the cell object at position \((i,j)\). The Sudoku DB learning sketch is
\[
\mathbb S_{\mathrm{DB}}
  =
  (S_{\mathrm{DB}},
   \mathcal D_{\mathrm{DB}},
   \mathcal L_{\mathrm{DB}},
   \mathcal K_{\mathrm{DB}}).
\]
The graph \(S_{\mathrm{DB}}\) contains the view maps
\[
\rho_i:X\to R_i,\qquad
\kappa_j:X\to K_j,\qquad
\beta_b:X\to B_b,
\]
and the cell-projection maps
\[
r_{ij}:R_i\to C_{ij},\qquad
k_{ij}:K_j\to C_{ij},\qquad
b_{ij}:B_b\to C_{ij},
\]
where \(b\) is the block containing cell \((i,j)\). The commutativity component \(\mathcal D_{\mathrm{DB}}\) says that every overlapping view gives the same cell:
\[
r_{ij}\rho_i
\sim
k_{ij}\kappa_j
\sim
b_{ij}\beta_b
\qquad
\text{for all }(i,j).
\]
The limiting component \(\mathcal L_{\mathrm{DB}}\) records that a grid is assembled from compatible local cell assignments, for example as a product-like cone of cell predictions. The colimiting component \(\mathcal K_{\mathrm{DB}}\) records gluing: row, column, and block views are identified along their shared cells. Additional Sudoku validity maps can be included in \(S_{\mathrm{DB}}\), such as maps \(R_i,K_j,B_b\to \mathsf{Perm}_9\), whose sketch constraints require each row, column, and block to realize the digit set exactly once.

A neural Sudoku solver is then a candidate model
\[
D_\theta:\Path(S_{\mathrm{DB}})\to\C
\]
whose objects may be soft digit distributions, logits, constraint states, or learned local representations. The DB obstruction
\[
\Obs\big(\Fact_{\mathbb S_{\mathrm{DB}}}(D_\theta)\big)
\]
measures failure of the predicted local views to glue into a globally consistent Sudoku solution. A scalar DB loss is obtained by measuring these overlap and constraint obstructions.

The tangent learning sketch for Differential DB applies the tangent structure to the same Sudoku factorization problem:
\[
\T D_\theta:\Path(S_{\mathrm{DB}})\to\C.
\]
Its INC term is
\[
\INC_{\mathrm{DB}}(D_\theta)
  =
  \Obs\big(\Fact_{\mathbb S_{\mathrm{DB}}}(\T D_\theta)\big).
\]
This asks whether infinitesimal changes in logits, givens, local cell beliefs, or constraint messages preserve row-column-block compatibility. In a crossword puzzle, the analogous sketch replaces Sudoku rows, columns, and blocks by across clues, down clues, and shared letter cells. The Berkeley Crossword Solver uses neural candidate generation followed by loopy belief propagation and local search to enforce such overlap constraints among potential answers \citep{wallace2022automated}. In \textsc{Lincs}   terms, DB measures failure of clue-wise word assignments to agree on overlaps, while Differential DB measures whether that agreement is stable under infinitesimal changes in clue embeddings or letter beliefs.

\paragraph{ALLORA.}
ALLORA trains low-rank neural adapters with Lie-algebraic commutator penalties so independently useful adapters compose more predictably \citep{mahadevan2026allora}. Its base defect is adapter order-sensitivity; its reflective \textsc{Lincs} version studies quotient-projectable adapter-flow interactions, including both bracket obstructions and connection-dependent symmetric acceleration in the representation bundle.

The low-rank presentation $\Delta=BA$ has a $\mathrm{GL}(r)$ redundancy,
$(B,A)\mapsto(BQ,Q^{-1}A)$.  Consequently, an optimizer field in factor
coordinates is not automatically a vector field on the quotient of effective
updates.  A valid tangent realization must either fix a balanced gauge or
provide an equivariant horizontal lift through the presentation map
$\pi(B,A)=BA$.  Once this descent datum is fixed, two task/commutator
directional interactions may be retained separately.  The Lie bracket records
their antisymmetric difference; a declared connection also defines their
symmetric acceleration.  Which component is useful for predicting adapter
composition is a sketch- and realization-specific question, not a categorical
preference for antisymmetry.

\paragraph{LASKO.}
LASKO models agentic skill optimization over controlled Lie algebroids, where edits to prompts, schemas, tools, validators, and workflow artifacts are treated as anchored sections whose brackets reveal order-sensitive repair interactions \citep{mahadevan2026lasko}. Its \textsc{Lincs}   version studies vector-bundle bracket defects for latent knowledge and skill-update manifolds.

\section{Diagrammatic Backpropagation Revisited}

Diagrammatic Backpropagation can be expressed as the repair of factorization failures \citep{mahadevan2026catagi}. Let \(D_\theta:J\to\C\) be a learned diagram depending on parameters \(\theta\), obtained from the path category of a learning sketch \(\mathbb S=(S,\mathcal D,\mathcal L,\mathcal K)\), and let \(q=q_{\mathcal D}:J\to J/{\sim_{\mathcal D}}\) encode the intended commutativity constraints. DB attempts to adjust \(\theta\) so that \(D_\theta\) factors through \(q\), possibly together with the limit and colimit constraints specified by \(\mathcal L\) and \(\mathcal K\).

In enriched settings, this factorization problem is often scalarized by a loss:
\[
\Loss_{DB}(\theta)=\Phi(\Fact_q(D_\theta)),
\]
where \(\Phi\) is an application-specific measure of factorization failure, such as a norm, energy, likelihood, divergence, Bregman distance, or empirical risk. Such scalarizations are important computationally, but they are not primitive in the categorical definition.

DB therefore replaces pointwise error with compositional error. The learning signal is no longer only ``prediction minus target''; it is a failure of a structured diagram to satisfy a universal compositionality condition.

\begin{remark}
Ordinary losses fit this pattern after scalarization. Supervised learning uses a diagram in which an input is mapped by a model and compared with a target map. Autoencoding compares an input with the reconstruction composite. Bellman residual minimization compares a value function with its Bellman transform. Contrastive learning compares two representation paths induced by augmentations. Sheaf learning compares restrictions along overlaps.
\end{remark}

\subsection{From global to local non-compositionality}

Diagrammatic Backpropagation can also be read as a localization principle.  A
global learning diagram is covered by smaller computational diagrams, and its
global failure of compositionality is restricted to failures attached to those
local pieces.  This suggests a categorical generalization of the role played
by the chain rule in ordinary backpropagation.

Let \(\{i_a:J_a\to J\}_{a\in A}\) be a family of subsketch inclusions whose
images cover the objects, generating arrows, and specified constraints of a
learning sketch \(\mathbb S\).  Write \(\mathbb S_a\) for the induced local
sketch and \(D_a=D i_a\) for the restriction of a model \(D:J\to\C\).

\begin{definition}[Obstruction localization]
An obstruction localization for the cover \(\{J_a\}_{a\in A}\) is a natural
assignment
\[
\Lambda_D:
\Obs\big(\Fact_{\mathbb S}(D)\big)
\longrightarrow
\mathsf{Comp}\!\left(
  \{\Obs(\Fact_{\mathbb S_a}(D_a))\}_{a\in A}
\right),
\]
where \(\mathsf{Comp}\) denotes families of local obstruction data satisfying
the required compatibility conditions on overlaps \(J_a\times_JJ_b\).  The
localization is \emph{complete} when a compatible family of inhabited local
factorization problems glues to an inhabited global factorization problem.
\end{definition}

The compatibility term is essential.  Vanishing of each isolated local defect
need not imply global compositionality unless the chosen local repairs agree on
their shared objects, paths, cones, and cocones.  Thus obstruction localization
is a descent statement for factorization problems, rather than an additive
decomposition of a numerical error.

\begin{proposition}[Tangent stability of obstruction localization]
Suppose restriction along each \(i_a\) preserves admissible models and
factorization problems, the obstruction assignment is natural under these
restrictions, and tangent lift commutes with restriction:
\[
(\T D)i_a\cong\T(Di_a).
\]
Then every obstruction localization \(\Lambda_D\) has a tangent localization
\[
\Lambda_{\T D}:
\INC(D)
\longrightarrow
\mathsf{Comp}\!\left(
  \{\INC(D_a)\}_{a\in A}
\right).
\]
If the base localization is complete and tangent lift preserves the overlap
descent data, the tangent localization is complete as well.
\end{proposition}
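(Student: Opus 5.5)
The plan is to obtain the tangent localization by applying the given localization to the lifted model and then rewriting its domain and codomain. Since \(\T\) acts on admissible models (the tangent learning sketch hypothesis), \(\T D:J\to\C\) is again an admissible model of \(\mathbb S\). I will read \(\Lambda\) as a natural family, indexed by admissible models, rather than a construction special to \(D\). Under that reading the instance \(\Lambda_{\T D}\) exists and has domain \(\Obs(\Fact_{\mathbb S}(\T D))=\INC(D)\) by definition. Its codomain is \(\mathsf{Comp}(\{\Obs(\Fact_{\mathbb S_a}((\T D)i_a))\}_a)\).

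The first main step is to identify each local term with a local INC. The isomorphism \((\T D)i_a\cong \T(Di_a)=\T D_a\) induces an isomorphism of factorization problems
\[
\Fact_{\mathbb S_a}((\T D)i_a)\cong \Fact_{\mathbb S_a}(\T D_a),
\]
because precomposition with \(q_{\mathcal D_a}\) and the realization of the local cones and cocones are invariant under natural isomorphism of diagrams. Naturality of \(\Obs\) then gives \(\Obs(\Fact_{\mathbb S_a}((\T D)i_a))\cong \INC(D_a)\).

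The second step is to check that these isomorphisms respect the overlap compatibility conditions defining \(\mathsf{Comp}\). Restricting further to \(J_a\times_J J_b\) and using the commutation isomorphism on the overlap, the two induced isomorphisms agree. This is a coherence condition on the family of commutation isomorphisms, and I would assume it as part of the hypothesis that tangent lift commutes with restriction, for instance by taking these isomorphisms to arise from a single lax or strict compatibility of \(\T\) with precomposition. Transporting along the resulting isomorphism of compatible families gives the required map \(\Lambda_{\T D}:\INC(D)\to\mathsf{Comp}(\{\INC(D_a)\})\).

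For completeness, take a compatible family of inhabited local problems \(\Fact_{\mathbb S_a}(\T D_a)\). Pulling them back along the isomorphisms above gives a compatible family for \((\T D)i_a\). The hypothesis that tangent lift preserves overlap descent data ensures that the compatibility witnesses on \(J_a\times_J J_b\) also transfer. Completeness of the localization, applied at the admissible model \(\T D\), then glues this family to an element of \(\Fact_{\mathbb S}(\T D)\).

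I expect the main obstacle to be a gap between what is given and what is needed. Completeness is only assumed for the base model \(D\), while the gluing step needs it at \(\T D\). If completeness is a property of the natural assignment \(\Lambda\) uniformly in admissible models, the argument above goes through directly. If not, I would fall back on lifting the gluing itself. The base gluing is a colimit or descent construction over the cover, and I would argue that \(\T\) preserves the relevant descent diagram, which is precisely the role of the "preserves overlap descent data" hypothesis. Then \(\T\) applied to the glued factorization \(\bar D\) gives \(\T\bar D\), with \(\T\bar D\, q_{\mathcal D}=\T D\), so \(\T\bar D\) witnesses inhabitation of \(\Fact_{\mathbb S}(\T D)\). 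The universal cones and cocones of \(\mathcal L,\mathcal K\) are preserved as part of the tangent learning sketch assumption.
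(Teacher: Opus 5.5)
Your proposal is correct and follows essentially the paper's argument. The identification \((\T D)i_a\cong\T D_a\), together with naturality of \(\Fact\) and \(\Obs\), sends the global tangent obstruction to the compatible family of local INC objects, and preservation of overlap descent carries gluing to the tangent level. The overlap coherence you propose to assume is automatic, because the definition of the tangent model gives \((\T D)i_a=\T(Di_a)\) strictly.

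The one loose end is in your fallback. Gluing \(\bar D\) at the base and then applying \(\T\) presupposes that the local problems \(\Fact_{\mathbb S_a}(D_a)\) are inhabited, and you do not derive this. It follows from inhabitation of \(\Fact_{\mathbb S_a}(\T D_a)\): the identity \(f=p\circ\T f\circ 0\) makes \(\T\) faithful and makes it reflect the specified limit cones and colimit cocones.
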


\begin{proof}
Restricting the tangent model to \(J_a\) gives
\((\T D)i_a\cong\T D_a\).  Naturality of the factorization and obstruction
assignments therefore sends the global tangent obstruction
\(\Obs(\Fact_{\mathbb S}(\T D))\) to the compatible family
\(\{\Obs(\Fact_{\mathbb S_a}(\T D_a))\}_{a\in A}\), which is precisely the
displayed family of local INC objects.  Preservation of overlap descent
transports the local-to-global gluing property to the tangent level.\qedhere
\end{proof}

For a feedforward network, the indexing category is a chain of layer maps and
the terminal supervised error is a scalarization of a global factorization
obstruction.  Reverse-mode differentiation applies the chain rule to transport
the sensitivity of that scalarization through the chain, producing a local
parameter-update signal at each layer.  In this sense, ordinary
backpropagation is the scalarized, additive instance of obstruction
localization.  \textsc{Lincs}   retains the same global-to-local architecture before
scalarization: it localizes base obstructions, tangent obstructions, and, when
defined, interaction-profile or higher-order obstructions.  This opens the possibility of
``backpropagating'' structured obstruction objects even when subtraction,
norms, or scalar losses are unavailable.

\section{Tangent Lift of Diagrammatic Backpropagation}

We now assume that \(\C\) carries tangent structure. Tangent categories axiomatize the behavior of tangent bundles in a categorical setting \citep{cockett2014tangent}. A tangent category has an endofunctor
\[
\T:\C\to\C
\]
together with structure maps such as projection, zero, addition, vertical lift, and canonical flip, satisfying axioms abstracting the tangent bundle of a smooth manifold.

In this paper, tangent learning sketches are the bridge between these two levels: they specify ordinary learning constraints and require those constraints to remain meaningful after applying the tangent functor. Thus the tangent functor lifts not only objects and morphisms in \(\C\), but also the sketch-theoretic factorization problems that define compositional learning.

The tangent-category axioms also support a notion of vector field and, in the standard tangent-categorical setting, a Lie bracket of vector fields. This is important for \textsc{Lincs}   because infinitesimal coherence is not exhausted by the existence of \(\Fact_{\mathbb S}(\T D)\). A learning diagram may generate tangent directions that are individually admissible, while their bracket lies outside the proper admissible distribution represented by the current architecture or repair language. In that case the learned infinitesimal dynamics are not closed relative to that distribution. A connection, when additionally declared, exposes symmetric as well as antisymmetric second-order interaction.

\begin{definition}[Tangent model]
Let \(D:J\to\C\) be a model of a learning sketch \(\mathbb S=(S,\mathcal D,\mathcal L,\mathcal K)\), with \(J=\Path(S)\). Its tangent model is
\[
\T D:J\to\C,
\]
obtained by applying the tangent functor objectwise and morphismwise:
\[
(\T D)(j)=\T(Dj),
\qquad
(\T D)(u)=\T(Du).
\]
\end{definition}

\begin{remark}[Weil-algebra indexing of tangent models]
Leung's classifier makes the iteration in this definition explicit
\citep[Sec.~14]{leung2017classifying}.  For the dual-number object
$W=\mathbb N[x]/(x^2)$ and the classifying functor
$F:\mathbb N\text{-}\mathsf{Weil}_1\to\operatorname{End}(\C)$, one has
$T=F(W)$.  Strong monoidality identifies the $n$-fold iterate with
\[
T^n\cong F(nW),
\]
where $nW$ is the $n$-fold coproduct of $W$, whereas the fiber-product power
$T^{(n)}$ used in the tangent axioms is represented by $F(W^n)$.  Consequently
$\operatorname{INC}^{(n)}(D)$ may be read as the obstruction to the learning
factorization problem after the Weil prolongation indexed by $nW$.  The
distinction between $nW$ and $W^n$ separates iterated or mixed infinitesimal
directions from the pullback powers required for additive tangent bundles.
\end{remark}

The base factorization problem asks whether computations compose:
\[
\Fact_{\mathbb S}(D)\neq\varnothing.
\]
The tangent factorization problem asks whether infinitesimal perturbations compose:
\[
\Fact_{\mathbb S}(\T D)\neq\varnothing.
\]

\begin{remark}[Tangent structure and quotient sketches]
The Cockett-Cruttwell axioms require the tangent functor to preserve the finite limits used in the tangent structure, together with compatibility of the projection, zero, addition, vertical lift, and canonical flip. The quotient \(q_{\mathcal D}:J\to J/{\sim_{\mathcal D}}\), however, is induced by the commutativity component of a sketch and may involve colimit-like data in the category of indexing shapes. The cone and cocone components \(\mathcal L,\mathcal K\) add further universal constraints. \textsc{Lincs}   therefore does not assume that \(\T\) literally preserves the quotient construction or all sketch data as colimits. Instead, it assumes that the \emph{factorization problem induced by the sketch} is stable under tangent lift: whenever \(D:J\to\C\) is admissible, \(\T D\) is again an admissible model of the same learning sketch, and \(\Fact_{\mathbb S}(\T D)\) is the transported tangent factorization problem.
\end{remark}

\begin{condition}[Tangent-stable factorization fibration]
The factorization fibration \(p:\mathsf{Fact}(\C)\to\mathsf{Learn}(\C)\) is tangent-stable when there is a lift of the tangent functor to factorization problems,
\[
\begin{array}{ccc}
\mathsf{Fact}(\C) & \xrightarrow{\T_{\mathsf{Fact}}} & \mathsf{Fact}(\C)\\
\downarrow p && \downarrow p\\
\mathsf{Learn}(\C) & \xrightarrow{\T} & \mathsf{Learn}(\C),
\end{array}
\]
so that the fiber over \(D\) is sent to the fiber over \(\T D\). This is the fibrational compatibility condition replacing any blanket assumption that \(\T\) preserves all quotient sketches as colimits.
\end{condition}

\begin{definition}[Scalarized \textsc{Lincs}   objective]
When scalarizations are available, a \textsc{Lincs}   objective associated with a diagram \(D_\theta\) is
\[
\Loss_{LINCS}(\theta)
  =
  \Loss_{base}\big(\Obs(\Fact_q(D_\theta))\big)
  +
  \lambda\Loss_{tan}\big(\INC(D_\theta)\big),
\]
where $\lambda=0$ is permitted. More generally, scalarized \textsc{Lincs}
may include higher-order tangent, bracket, connection, curvature, or jet terms.
These are candidate signals supplied by the categorical realization; their
nonzero computational admission is an additional validation decision rather
than an axiom of tangent compositionality.
\end{definition}

Thus \textsc{Lincs}   is not a new optimizer. It is a rule for lifting learning compositionality problems into tangent structure.

\section{\textsc{Lincs}   Axioms}

In this section, we propose an axiomatic structure for \textsc{Lincs}  . First, we state the underlying theoretical object more formally. 

\begin{definition}[Tangent compositionality problem]
Let \(D:J\to\C\) be a model of a learning sketch \(\mathbb S=(S,\mathcal D,\mathcal L,\mathcal K)\), with commutativity quotient \(q=q_{\mathcal D}:J\to J/{\sim_{\mathcal D}}\). The tangent compositionality problem of \(D\) is
\[
\Fact_{\mathbb S}(\T D),
\]
or \(\Fact_q(\T D)\) when only the commutativity quotient is under discussion. It asks whether the tangent-lifted model \(\T D\) satisfies the same sketch-theoretic factorization problem as \(D\).
\end{definition}

\begin{definition}[Infinitesimal Non-Compositionality]
The infinitesimal non-compositionality of \(D\) is the obstruction to inhabiting its tangent compositionality problem:
\[
\INC(D)
  :=
  \Obs\big(\Fact_{\mathbb S}(\T D)\big).
\]
When higher-order tangent structure is available, the \(n\)-th order INC is
\[
\INC^{(n)}(D)
  :=
  \Obs\big(\Fact_{\mathbb S}(\T^n D)\big).
\]
\end{definition}

Thus INC has a factorization form.  A learning diagram may additionally
determine vector fields, tangent sections, a connection, or other prolonged
operations.  These data refine tangent INC, but they do not replace its
factorization definition.

\begin{definition}[Interaction signature]
An interaction signature $\Omega_D$ for an admissible model $D$ is a declared
family of operations on an admissible tangent distribution
$\mathcal A_D\subseteq\Vect_D$.  It may contain the Lie bracket, covariant
directional derivatives for a specified connection, curvature or torsion,
canonical-flip comparisons, or higher jet operations.  The signature is part
of the learning sketch or its realization; no single second-order operation
is required for every sketch.
\end{definition}

\begin{definition}[Connection-aware second-order profile]
Suppose the realization of $D$ carries a connection $\nabla$ and generates
admissible fields $X,Y$.  Its ordered second-order profile is
\[
J^2_{\nabla,D}(X,Y)
  =
  \big(\nabla_XY,\nabla_YX\big).
\]
The associated antisymmetric and symmetric components are
\[
A_\nabla(X,Y)
  =\frac12\big(\nabla_XY-\nabla_YX\big),
\qquad
S_\nabla(X,Y)
  =\frac12\big(\nabla_XY+\nabla_YX\big).
\]
For a torsion-free connection, $2A_\nabla(X,Y)=[X,Y]$.  In contrast,
$S_\nabla$ depends on the declared connection and is not primitive tangent
category structure.
\end{definition}

\begin{definition}[Bracket-closure INC]
Let \(\Vect_D\) denote the class of tangent directions, vector fields, or tangent sections generated by an admissible learning model \(D\), and let
$\mathcal A_D\subseteq\Vect_D$ be the directions declared admissible for the
current architecture or repair language. The bracket-closure INC of \(D\) is the closure problem
\[
\Cl_{[,]}(\mathcal A_D):
\qquad
  \{\, [X,Y]\mid X,Y\in\mathcal A_D\,\}
  \subseteq
  \mathcal A_D.
\]
The model is bracket-closed relative to $\mathcal A_D$ when this inclusion
holds. Failure of the inclusion is a second-order infinitesimal
non-compositionality.  Taking $\mathcal A_D$ to be the entire tangent bundle
would make this test vacuous for genuine smooth vector fields, whose bracket is
automatically tangent; structural nonclosure is meaningful relative to a
proper admissible distribution such as a fixed adapter placement, shared
factor subspace, or repair vocabulary.
\end{definition}

This definition avoids subtracting morphisms in the base category. The primitive obstruction is not a residual \(g\circ f-f\circ g\). It is the failure of tangent directions generated by non-compositional learning data to be involutive under the Lie bracket supplied by the tangent structure.  Bracket nonclosure is one interaction obstruction.  A connection-aware realization may instead or additionally observe $S_\nabla$, $A_\nabla$, or the full ordered profile $J^2_\nabla$ without asserting that antisymmetry is the unique carrier of useful tangent information.

If \(\Fact_q(D)\) is scalarized by a loss, then \(\Obs(\Fact_q(\T D))\) may induce a gradient-like or Jacobian-like regularizer. But \textsc{Lincs}   is not merely gradient regularization. The object being lifted is not necessarily a scalar objective; it is the universal compositionality problem whose obstruction produced the loss.

\begin{remark}
Backpropagation differentiates a scalar loss. \textsc{Lincs}   lifts the categorical factorization problem whose scalarization produced the loss. This distinction matters because the tangent lift can preserve structure that scalarization discards.
\end{remark}

\subsection{Axioms}

We now propose an axiomatization of \textsc{Lincs}   categories.

\begin{definition}[\textsc{Lincs}   object]
A \textsc{Lincs}   object is a pair
\[
(X,\mathcal S_X),
\]
where \(X\in\Ob(\C)\) and \(\mathcal S_X\) is a specified class of learning sketches whose models land in, are indexed by, or are interpreted around \(X\). Informally, \(X\) is a state, representation, policy, local model, or world-model object together with the compositionality problems it is expected to satisfy.
\end{definition}

\begin{definition}[\textsc{Lincs}   morphism]
A \textsc{Lincs}   morphism
\[
f:(X,\mathcal S_X)\to(Y,\mathcal S_Y)
\]
is a morphism \(f:X\to Y\) in \(\C\) together with a transport of learning sketches and their models from \(\mathcal S_X\) to \(\mathcal S_Y\), such that tangent lift commutes with this transport. Concretely, if \(D:J\to\C\) is a transported model, then transporting \(\T D\) agrees with tangent-lifting the transported model.
\end{definition}

\begin{definition}[INC category]
An INC category is a tangent category \((\C,\T)\) equipped with:
\begin{enumerate}[leftmargin=2em]
  \item a class \(\mathsf{Sk}_{\C}\) of tangent learning sketches \(\mathbb S=(S,\mathcal D,\mathcal L,\mathcal K)\);
  \item for each \(\mathbb S\), with \(J=\Path(S)\), a class of admissible models \(D:J\to\C\);
  \item for each sketch \(\mathbb S\), the associated base factorization problem \(\Fact_{\mathbb S}(D)\), including the quotient induced by \(\mathcal D\) and the cone/cocone constraints induced by \(\mathcal L,\mathcal K\);
  \item for each admissible model \(D\), the tangent model \(\T D:J\to\C\) and tangent factorization problem \(\Fact_{\mathbb S}(\T D)\);
  \item when higher interaction data are assigned to learning models, an
  interaction signature $\Omega_D$, an admissible distribution $\mathcal A_D$,
  and the corresponding closure or profile problem; bracket closure
  $\Cl_{[,]}(\mathcal A_D)$ is the intrinsic antisymmetric specialization.
\end{enumerate}
\end{definition}

\begin{axiom}[Base compositionality]
For every admissible model \(D:J\to\C\) of a learning sketch \(\mathbb S\), \(D\) is compositional if and only if \(\Fact_{\mathbb S}(D)\) is inhabited.
\end{axiom}

\begin{axiom}[Tangent admissibility]
If \(D:J\to\C\) is an admissible model of a tangent learning sketch \(\mathbb S\), then \(\T D:J\to\C\) is also an admissible model of \(\mathbb S\), and the assignment \(D\mapsto \T D\) agrees with the tangent action specified by the sketch.
\end{axiom}

\begin{axiom}[Presentation descent]
Suppose a computational realization is expressed in a parameter presentation
$\pi:P\to M$ with redundant representatives.  An optimizer or repair field
$V_P:P\to TP$ is admissible as a field on the model object $M$ only when it is
$\pi$-projectable: there is a field $V_M:M\to TM$ such that
\[
T\pi\circ V_P=V_M\circ\pi.
\]
Alternatively, the realization must declare a gauge fixing or an equivariant
horizontal lift $h$ satisfying $T\pi\circ h(V_M)=V_M$.  Tangent obstruction
signals that fail this descent condition are presentation artifacts and are
not INC data on $M$.
\end{axiom}

\begin{axiom}[Tangent compositionality]
For every admissible model \(D:J\to\C\) of a learning sketch \(\mathbb S\), infinitesimal compositionality is the factorization property
\[
\Fact_{\mathbb S}(\T D)\neq\varnothing.
\]
Thus tangent compositionality is defined by the same universal property as base compositionality, but applied after the tangent functor. INC is the obstruction \(\Obs(\Fact_{\mathbb S}(\T D))\) to this property.
\end{axiom}

\begin{axiom}[Sketch-specific interaction]
For every admissible model $D$ equipped with an interaction signature
$\Omega_D$ and admissible tangent distribution $\mathcal A_D$, each operation
declared by $\Omega_D$ determines a closure, factorization, or profile problem.
Failure of a declared closure or factorization property is an admissible INC
signal.  When $[,]\in\Omega_D$, this includes the relative bracket-closure
problem
\[
\Cl_{[,]}(\mathcal A_D):
\qquad
[X,Y]\in\mathcal A_D
\quad
\text{for all }X,Y\in\mathcal A_D.
\]
When a connection belongs to the realization, $J^2_\nabla$, $S_\nabla$, and
$A_\nabla$ are admissible profile data.  The sketch does not categorically
mandate that the antisymmetric projection be primary.
\end{axiom}

\begin{axiom}[Conditional non-compositionality transport]
For a model and sketch carrying an interaction signature, a declared transport
is a natural assignment
\[
\mathfrak t_{D,\mathbb S}:
\Obs\big(\Fact_{\mathbb S}(D)\big)
\rightsquigarrow
\Obs_{\Omega,\mathbb S}(\T D).
\]
The dashed arrow emphasizes that transport is additional sketch-relative data,
not an equality between base failure and bracket closure.  An admissible
transport must be natural under model transformations, descend through any
parameter presentation, and preserve the semantics of the factorization
problem being repaired.  Bracket transport is recovered when
$\Omega_D=\{[,]\}$; its existence alone does not imply that its scalarization
is informative for a particular repair target.
\end{axiom}

\begin{axiom}[Functoriality of repair]
If a transformation of models \(\alpha:D\to D'\) is an admissible repair, then \(\T\alpha:\T D\to\T D'\) is an admissible infinitesimal repair. Moreover, repair preserves factorization: if \(D'\) factors through \(q\), then repaired models factor through \(q\) whenever the repair is declared compositionality-preserving.
For a cover by local subsketches, a repair declared descent-compatible must
also commute with restriction up to the specified coherence,
\[
\res_U\circ R_G
\;\cong\;
R_U\circ\res_U,
\]
and satisfy the corresponding cocycle conditions on overlaps.  Functorial
repair, descent-compatible repair, and empirically useful repair are distinct
notions.
\end{axiom}

\begin{axiom}[Optional scalarization]
An INC category may be equipped with scalarization maps
\[
\Phi_q:\{\text{obstructions to factorization problems over }q\}\to\R_{\ge 0},
\]
but such maps are additional computational structure, not part of the bare categorical definition. When present, \(\Phi_q\) should vanish on inhabited factorization problems:
\[
\Fact_q(D)\neq\varnothing
\quad\Rightarrow\quad
\Phi_q\big(\Obs(\Fact_q(D))\big)=0.
\]
\end{axiom}

\begin{condition}[Observation specificity and identifiability]
A computational realization may retain a profile-valued observation
$\Psi_{\mathbb S}(D,e)$ before scalarization, containing local, overlap,
counterfactual, or downstream effects of a candidate repair $e$.  Structural
identification from this profile requires the separate condition
\[
\Psi_{\mathbb S}(D,e)=\Psi_{\mathbb S}(D,e')
\quad\Longrightarrow\quad
e\simeq e'.
\]
When this implication fails, the sketch identifies only an equivalence class
or moduli space of repairs.  Defect reduction alone does not imply recovery of
a privileged architecture.
\end{condition}

\begin{definition}[Validated scalarized realization]
A validated scalarized \textsc{Lincs} realization consists of profile or
obstruction scalarizations together with an admission rule, fitted without
test leakage, that may assign zero weight to any auxiliary tangent signal.  A
typical base-first objective is
\[
\Loss_D^{\mathrm{val}}
  =
  \Phi_q\big(\Obs(\Fact_q(D))\big)
  +
  \sum_{k\in\Omega_D}g_k(\mathcal V)\lambda_k\Phi_k(\INC_k(D)),
\]
where $\mathcal V$ is validation data, $g_k(\mathcal V)\in\{0,1\}$ or
$\mathbb R_{\geq0}$, and sparse solutions with $g_k=0$ are admissible.  For a
connection-aware two-field realization the learned coefficients may be
written $(\mu_s,\mu_a)$ for symmetric and antisymmetric profiles.  If these
profiles are inserted directly into a vector update rather than a scalar
loss, signed regularized coefficients and the appropriate step-size scaling
are allowed.
\end{definition}

\begin{condition}[Stochastic consistency]
Suppose an interaction profile is estimated from minibatches by
$\widehat\Psi_K$.  A stochastic \textsc{Lincs} realization is consistent when
its aggregated estimator converges to the profile of the expected admissible
field,
\[
\widehat\Psi_K
\longrightarrow
\Psi_{\mathbb E}
\]
in the declared probabilistic or metric sense.  A finite-sample certificate
may take the form
\[
\mathbb E\norm{\widehat\Psi_K-\Psi_{\mathbb E}}^2
\leq \frac{C}{K}+b_K^2,
\qquad b_K\to0.
\]
Same-minibatch and independently crossed directional terms may have different
variance because their covariance differs, even when they converge to the
same expected-field profile.  This estimator convergence is distinct from the
contractivity of the coalgebraic INC endofunctor.
\end{condition}

\begin{definition}[\textsc{Lincs}   category]
A \textsc{Lincs}   category is an INC category together with a rule that, for every admissible model \(D:J\to\C\), regards the pair
\[
\big(\Fact_q(D),\Fact_q(\T D)\big)
\]
as the basic learning datum. When an interaction signature is present, the datum is refined to
\[
\big(\Fact_q(D),\Fact_q(\T D),\Obs_{\Omega,D}(\T D)\big),
\]
with $\Cl_{[,]}(\mathcal A_D)$ one possible component. If optional scalarizations are present, this datum may be converted into an objective
\[
\Loss_D
  =
  \Phi_q\big(\Obs(\Fact_q(D))\big)
  +
  \lambda\Phi_q^T\big(\INC(D)\big)
  +
  \sum_{k\in\Omega_D}\mu_k\Phi_q^k\big(\INC_k(D)\big).
\]
The bare category supplies candidate interaction data; a validated scalarized
realization determines which coefficients $\mu_k$ are nonzero.
\end{definition}

\begin{proposition}[Functoriality of INC data]
Let \(F:\C\to\D\) be a tangent functor between tangent categories, so that \(F\T_{\C}\cong \T_{\D}F\). Suppose \(F\) preserves learning sketches, admissible models, and quotient factorization problems. Then \(F\) maps INC data in \(\C\) to INC data in \(\D\):
\[
F\big(\Obs(\Fact_q(\T_{\C}D))\big)
  \longmapsto
\Obs\big(\Fact_{Fq}(\T_{\D}(FD))\big).
\]
In particular, tangent functors preserving the learning structure preserve tangent compositionality problems and their obstructions.
If interaction signatures, connections, admissible distributions, or
presentation quotients are included, preservation of those additional data is
a separate hypothesis; it does not follow from tangent functoriality alone.
\end{proposition}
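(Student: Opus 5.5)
The plan is to unwind the hypothesis ``\(F\) preserves quotient factorization problems'' into a concrete map on factorization sets, apply it to the tangent model, and then use the tangent-functor isomorphism \(F\T_{\C}\cong\T_{\D}F\) to identify the target. The first step is to make preservation precise. For every admissible \(D:J\to\C\), I read the hypothesis as a map
\[
F_*:\Fact_q(D)\to\Fact_{Fq}(FD),\qquad \bar D\mapsto F\bar D .
\]
Commutativity is automatic: if \(D=\bar Dq\), then \(FD=(F\bar D)q\). Preservation of the cone and cocone data \(\mathcal L,\mathcal K\) is part of the assumption that \(F\) preserves learning sketches. This map should be natural in \(D\) along model transformations. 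The obstruction \(\Obs(-)\) is assumed functorial on factorization problems. That is the same naturality already used in the tangent-stability proposition for obstruction localization. So \(F_*\) induces a map \(\Obs(\Fact_q(D))\to\Obs(\Fact_{Fq}(FD))\).

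The second step is to apply this to \(\T_{\C}D\). By the tangent admissibility axiom, \(\T_{\C}D\) is admissible, so \(F_*\) gives a map \(\Obs(\Fact_q(\T_{\C}D))\to\Obs(\Fact_{Fq}(F\T_{\C}D))\). The third step transports along the natural isomorphism \(\varphi:F\T_{\C}\Rightarrow\T_{\D}F\). Whiskering by \(D\) gives an isomorphism of diagrams \(\varphi D:F\T_{\C}D\cong\T_{\D}FD\) indexed by the same \(J\). Factorizations through \(q\) are then in bijection: \(\bar E\mapsto\varphi\bar D\circ\bar E\), where \(\varphi\) is whiskered along the quotient. This works because \(q\) is bijective on objects, so componentwise isomorphisms descend. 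Isomorphic diagrams realize cones and cocones as limits or colimits simultaneously. Hence \(\Fact_{Fq}(F\T_{\C}D)\cong\Fact_{Fq}(\T_{\D}FD)\), and composing gives the displayed assignment into \(\Obs(\Fact_{Fq}(\T_{\D}(FD)))=\INC(FD)\). In particular an inhabited tangent problem maps to an inhabited one.

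The main obstacle is the third step. Transferring factorizations along \(\varphi D\) needs \(\varphi\) to be compatible with whatever extra structure the sketch attaches to the tangent lift; the tangent learning sketch ``action'' is more than the bare functor. I would first try to reduce this to the \textsc{Lincs} morphism condition: transporting \(\T D\) agrees with tangent-lifting the transported model. That lets me treat \(\varphi D\) as the transport witness. If that proves too weak, I would strengthen the hypothesis so that \(F\) is a strong tangent morphism, with \(\varphi\) commuting with projection, zero, addition, lift and flip. I would then check that the Weil-indexed action is preserved, in the sense of Leung's classification.

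The final remark needs no argument beyond the following observation. The construction above used only \(F\), \(\varphi\) and sketch preservation. So preserving interaction signatures, connections or presentation quotients requires separate hypotheses of the same shape. For example, \(F\) would have to send \(\mathcal A_D\) into \(\mathcal A_{FD}\) and commute with \(\nabla\).
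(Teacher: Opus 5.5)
Your proposal is correct and follows the same route as the paper's proof. Sketch preservation sends $\Fact_q(\T_{\C}D)$ to $\Fact_{Fq}(F\T_{\C}D)$, the tangent isomorphism $F\T_{\C}\cong\T_{\D}F$ identifies this with $\Fact_{Fq}(\T_{\D}(FD))$, and the obstruction assignment is then applied. Your explicit transport of factorizations by conjugating with the components of $\varphi D$ just spells out the step the paper compresses into one line; it is valid because $q$ is bijective on objects and limit/colimit realization is invariant under isomorphism of diagrams, so the strengthening you held in reserve for the ``main obstacle'' is not needed.
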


\begin{proof}
Since \(F\) preserves the learning sketch and quotient, the base factorization problem \(\Fact_q(D)\) is sent to the corresponding factorization problem \(\Fact_{Fq}(FD)\). Since \(F\) is tangent, \(F(\T_{\C}D)\cong \T_{\D}(FD)\). Therefore the image of the tangent factorization problem \(\Fact_q(\T_{\C}D)\) is the tangent factorization problem \(\Fact_{Fq}(\T_{\D}(FD))\). Applying the obstruction assignment gives the claimed map on INC data.
\end{proof}

\subsection{Universal Properties}

The universal property question is: what is the minimal tangent learning structure generated by a category of base learning diagrams?

\begin{definition}[\textsc{Lincs}   completion, informal]
Let \(\C\) be a category equipped with learning sketches and admissible models. A \textsc{Lincs}   completion of \(\C\) is a tangent category \(\LINCS(\C)\) together with an embedding
\[
i:\C\to\LINCS(\C)
\]
such that base factorization problems in \(\C\) acquire tangent factorization problems in \(\LINCS(\C)\), and any sketch/model-preserving functor from \(\C\) into a \textsc{Lincs}   category factors through \(i\).
\end{definition}

\begin{conjecture}[Free \textsc{Lincs}   completion]
For a suitable class of sketch-equipped categories \(\C\), there exists a free \textsc{Lincs}   category \(\LINCS(\C)\) satisfying the universal property:
\[
\Hom_{\mathsf{LINCSCat}}(\LINCS(\C),\D)
  \cong
  \Hom_{\mathsf{SkCat}}(\C,U\D),
\]
where \(U\) forgets tangent factorization structure.
\end{conjecture}

This conjecture is the \textsc{Lincs}   analogue of free tangent or differential completions. It says that \textsc{Lincs}   is not merely an added regularizer; it is a universal completion that freely adds infinitesimal factorization structure to a base learning category.  For interaction-enriched \textsc{Lincs}, the universal property is relative to the declared signature $\Omega$: a bracket-closed completion, a connection-equipped jet completion, and a higher-operation completion are distinct specializations rather than one mandatory Lie-algebraic completion.

\section{\textsc{Lincs}   as a Coalgebra}

There is a second universal perspective on \textsc{Lincs}  , related to the standard coalgebraic treatment of state-based systems and final semantics \citep{rutten2000universal}. A tangent category contains an endofunctor
\[
\T:\C\to\C.
\]
An endofunctor does not by itself make every object into a coalgebra, but it determines a category of coalgebras once one chooses structure maps
\[
\gamma:X\to\T X.
\]
In a tangent category, vector fields are the basic examples: a vector field on \(X\) is a section \(v:X\to\T X\) of the tangent projection \(p_X:\T X\to X\). Thus vector fields can be read as \(T\)-coalgebras compatible with tangent structure.

This gives a coalgebraic reading of INC. A learning model \(D:J\to\C\) has a tangent tower
\[
D,\quad \T D,\quad \T^2D,\quad \ldots
\]
and a corresponding tower of factorization problems
\[
\Fact_q(D),\quad
\Fact_q(\T D),\quad
\Fact_q(\T^2D),\quad \ldots .
\]
The sequence resembles a system repeatedly unfolding its own infinitesimal behavior. Each stage asks whether the previous compositionality problem remains coherent after one more application of the tangent functor.

\begin{definition}[INC endofunctor]
The INC endofunctor is the operation
\[
\T_{\INC}:\mathsf{INC}(\C)\to\mathsf{INC}(\C)
\]
defined on admissible tangent factorization problems by
\[
\T_{\INC}\big(\Fact_q(\T^nD)\big)
  =
  \Fact_q(\T^{n+1}D),
\]
whenever these iterated tangent models are admissible.
\end{definition}

Equivalently, after applying the obstruction assignment,
\[
\T_{\INC}\big(\Obs(\Fact_q(\T^nD))\big)
  =
  \Obs(\Fact_q(\T^{n+1}D)).
\]

\begin{definition}[\textsc{Lincs}   coalgebra]
A \textsc{Lincs}   coalgebra is an INC object \(A\in\mathsf{INC}(\C)\) equipped with a structure map
\[
\gamma:A\to \T_{\INC}A
\]
that transports base INC data to its next tangent lift. In concrete learning models, \(\gamma\) may be induced by a vector field, update rule, policy perturbation, local flow, or infinitesimal repair map.
\end{definition}

\begin{definition}[Coalgebraic \textsc{Lincs}   fixed point]
A coalgebraic \textsc{Lincs}   fixed point is a \textsc{Lincs}   coalgebra \((A,\gamma)\) for which the comparison
\[
A\longrightarrow \T_{\INC}A
\]
is stable in the chosen categorical sense: isomorphism, equivalence, bisimulation, or convergence under an application-specific scalarization. At such a point, applying the tangent lift produces no essentially new INC data.
\end{definition}

For practical learning systems, the most useful notion of stability is likely not literal isomorphism, but enriched convergence: the INC tower stabilizes when the obstruction objects at successive levels are related by an approximate bisimulation and a scalar Lyapunov functional decreases to a fixed tolerance. For a neural network, reaching \(\nu\T_{\INC}\) means that further tangent unfolding does not reveal new out-of-distribution directions, declared interaction defects, or higher-order factorization obstructions not already represented by the learned tangent model.

This formulation says that \textsc{Lincs}   can be studied as the stable behavior of iterated infinitesimal non-compositionality. The base category supplies learning diagrams; the tangent functor unfolds their infinitesimal defects; the sketch-specific interaction signature tests whether generated directions remain coherent and internally expressible; and the coalgebraic fixed point is the limit at which these unfoldings become self-consistent.

\begin{definition}[Set-based class realization]
A set-based class realization of the INC endofunctor consists of a fully
faithful semantics
\[
U:\mathsf{INC}(\C)\longrightarrow\mathbf{Class}
\]
and a set-based endofunctor \(F:\mathbf{Class}\to\mathbf{Class}\) together
with a natural isomorphism
\[
U\T_{\INC}\cong FU.
\]
Here set-based means that, for every class \(A\) and every \(x\in F(A)\),
there are a set \(A_0\subseteq A\) and \(x_0\in F(A_0)\) such that
\[
x=F(i_{A_0,A})(x_0),
\]
where \(i_{A_0,A}:A_0\hookrightarrow A\) is inclusion.  We say that the
realization \emph{creates the final carrier} when the carrier and structure
map of the final \(F\)-coalgebra lie in the essential image of \(U\), and
coalgebra morphisms between realized objects are reflected by \(U\).
\end{definition}

\begin{theorem}[Existence of a final INC coalgebra]
Let \(\T_{\INC}\) admit a set-based class realization.  Then the realizing
endofunctor \(F\) has a final coalgebra in \(\mathbf{Class}\).  If the
realization creates the final carrier, then \(\T_{\INC}\) has a final
coalgebra in \(\mathsf{INC}(\C)\).
\end{theorem}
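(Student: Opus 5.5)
The plan has two parts. First, \(F\) acquires a final coalgebra by the Aczel--Mendler theorem. Second, that final coalgebra is transported back along \(U\) using the creation hypothesis and the isomorphism \(U\T_{\INC}\cong FU\).

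\textbf{Step 1 (final coalgebra in \(\mathbf{Class}\)).} The Aczel--Mendler theorem states that every set-based endofunctor of \(\mathbf{Class}\) has a final coalgebra. The hypothesis is exactly the set-based condition as defined: every element of \(F(A)\) lies in the image of \(F(i_{A_0,A})\) for some subset \(A_0\subseteq A\). I would verify only that this matches Aczel and Mendler's hypothesis. The standard proof forms the class-indexed colimit of all small coalgebras, or equivalently quotients the disjoint union of all set-sized \(F\)-coalgebras by the largest bisimulation. Set-basedness guarantees that every coalgebra is the union of its small subcoalgebras, so finality of this quotient follows. One point needs checking: Aczel--Mendler work with a functor on classes where weak pullback preservation is not required, because they quotient by a \emph{congruence} rather than a bisimulation. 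So I would invoke their ``final coalgebra theorem'' in that general form, and call the result \((Z,\zeta:Z\to FZ)\).

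\textbf{Step 2 (lifting to \(\mathsf{INC}(\C)\)).} Creation of the final carrier supplies an object \(A\in\mathsf{INC}(\C)\) with \(UA\cong Z\). Composing with the natural isomorphism gives
\[
UA\cong Z\xrightarrow{\zeta}FZ\cong FUA\cong U\T_{\INC}A.
\]
The creation hypothesis puts this structure map in the essential image of \(U\). Full faithfulness of \(U\) then yields a unique \(\gamma:A\to\T_{\INC}A\) with \(U\gamma\) equal to that composite.

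To prove finality, I would take an arbitrary coalgebra \((B,\beta)\) in \(\mathsf{INC}(\C)\). Transporting \(U\beta\) along \(U\T_{\INC}\cong FU\) makes \(UB\) an \(F\)-coalgebra, so Step 1 gives a unique \(F\)-coalgebra morphism \(UB\to Z\cong UA\). Full faithfulness lifts it to a unique \(h:B\to A\), and reflection of coalgebra morphisms makes \(h\) a \(\T_{\INC}\)-coalgebra morphism. For uniqueness, any other coalgebra morphism \(h'\) maps under \(U\), using naturality of the isomorphism, to an \(F\)-coalgebra morphism. Hence \(Uh'=Uh\), and faithfulness gives \(h'=h\).

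\textbf{Main obstacle.} The categorical transport in Step 2 is routine bookkeeping with the naturality square of \(U\T_{\INC}\cong FU\). The delicate point is Step 1: confirming that the Aczel--Mendler theorem applies under precisely the stated set-based condition, with no extra preservation hypothesis, and handling the metaclass issues in forming the union of all small coalgebras. I would treat that theorem as a black box from the literature and check only that the definitions align.
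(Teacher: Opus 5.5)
Your proposal is correct and follows the paper's proof. You invoke the Aczel--Mendler Final Coalgebra Theorem to obtain $(Z,\zeta)$, transport $\zeta$ to an object $A$ with $UA\cong Z$ via the natural isomorphism $U\T_{\INC}\cong FU$, and lift the unique $F$-coalgebra morphisms back along the fully faithful $U$. Your Step 2 only spells out the uniqueness check (naturality plus faithfulness) that the paper compresses into ``fullness and reflection lift it uniquely.'' Your side remarks on how Aczel and Mendler prove their theorem are inessential, since both you and the paper use it as a black box.
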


\begin{proof}
The first statement is the Final Coalgebra Theorem of Aczel and Mendler:
every set-based endofunctor on the category of classes has a final coalgebra
\citep{aczel1989final}.  Let \((Z,\zeta:Z\to FZ)\) be this final coalgebra.
When the realization creates the final carrier, choose
\(A\in\mathsf{INC}(\C)\) with \(UA\cong Z\).  The natural isomorphism
\(U\T_{\INC}\cong FU\) transports \(\zeta\) to a
\(\T_{\INC}\)-coalgebra structure on \(A\).  For any other realized
coalgebra, finality of \((Z,\zeta)\) gives a unique coalgebra morphism into
\(Z\); fullness and reflection lift it uniquely to \(A\).  Hence \(A\) is
final in \(\mathsf{INC}(\C)\).\qedhere
\end{proof}

\begin{definition}[Accessible set realization]
An accessible set realization of \(\T_{\INC}\) consists of a fully faithful
functor
\[
V:\mathsf{INC}(\C)\longrightarrow\mathbf{Set},
\]
an accessible endofunctor \(G:\mathbf{Set}\to\mathbf{Set}\), and a natural
isomorphism \(V\T_{\INC}\cong GV\).  As above, the realization
\emph{creates the final carrier} when the final \(G\)-coalgebra lies in the
essential image of \(V\) and coalgebra morphisms between realized objects are
reflected by \(V\).
\end{definition}

\begin{theorem}[Set-sized final INC coalgebra]
If \(\T_{\INC}\) admits an accessible set realization, then its realizing
endofunctor \(G\) has a final coalgebra in \(\mathbf{Set}\).  If the
realization creates the final carrier, this coalgebra lifts to a final
\(\T_{\INC}\)-coalgebra in \(\mathsf{INC}(\C)\).

More quantitatively, let \(\kappa>\aleph_0\) be regular and suppose that
\(2^\lambda\leq\kappa\) for every \(\lambda<\kappa\).  If \(G\) is
\(\kappa\)-accessible and
\[
|A|<\kappa\quad\Longrightarrow\quad |G A|\leq\kappa,
\]
then the final \(G\)-coalgebra has cardinality at most \(\kappa\).
\end{theorem}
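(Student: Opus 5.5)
The plan has three parts: existence of a final $G$-coalgebra in $\mathbf{Set}$, lifting it along $V$ exactly as in the preceding class-based theorem, and a separate cardinality count.

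\textbf{Existence in $\mathbf{Set}$.} I will invoke Barr's theorem: every accessible endofunctor of $\mathbf{Set}$ has a final coalgebra. If I reconstruct the argument rather than cite it, I first pass from $\lambda$-accessibility to boundedness. Every element of $GA$ lies in the image of $GA_0$ for some subset $A_0\subseteq A$ with $|A_0|<\lambda$, after handling the empty set separately. A bounded endofunctor has a final coalgebra, obtained as a quotient of the coproduct of all coalgebras on carriers of cardinality $<\lambda$ modulo the largest bisimulation. Equivalently, $G$ restricts to a set-based endofunctor on $\mathbf{Class}$, and Aczel--Mendler (already used in the previous theorem) supplies a final coalgebra. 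Boundedness then forces the final carrier to be a set.

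\textbf{Lifting to $\mathsf{INC}(\C)$.} Let $(Z,\zeta)$ be the final $G$-coalgebra. By creation, choose $A$ with $VA\cong Z$, and transport $\zeta$ through $V\T_{\INC}\cong GV$ to a map $VA\to V\T_{\INC}A$. Fullness of $V$ gives a structure map $A\to\T_{\INC}A$. Now take any $\T_{\INC}$-coalgebra $(B,\beta)$. Then $V\beta$ is a $G$-coalgebra, so there is a unique $G$-coalgebra map into $Z$. Fullness and reflection of coalgebra morphisms give a lift to $B\to A$, and faithfulness gives uniqueness. This step is routine.

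\textbf{Cardinality bound.} I will run the terminal sequence $1\leftarrow G1\leftarrow G^21\leftarrow\cdots$. Its limit $W_\kappa$ at stage $\kappa$ is a candidate final coalgebra, because $\kappa$-accessibility together with regularity of $\kappa>\aleph_0$ lets $G$ commute with the relevant $\kappa$-step limits up to a monomorphism. (Worrell's result gives stabilization at stage $\kappa+\kappa$, after which the carrier is a subset of $W_\kappa$.)

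I will bound $|W_\alpha|$ by transfinite induction, aiming to show $|W_\alpha|\le\kappa$ for all $\alpha\le\kappa$ and $|W_\alpha|<\kappa$ for $\alpha<\kappa$:
\begin{itemize}
\item \emph{Successor steps.} If $|W_\alpha|<\kappa$, the growth hypothesis gives $|GW_\alpha|\le\kappa$.
\item \emph{Limit steps.} $W_\beta$ embeds in $\prod_{\alpha<\beta}W_\alpha$. The hypothesis $2^\lambda\le\kappa$ for $\lambda<\kappa$ makes such a product of size at most $\kappa^{<\kappa}$, which is $\kappa$ under these assumptions.
\end{itemize}
The final coalgebra is then a subset of a limit of size $\le\kappa$.

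\textbf{Main obstacle.} The delicate point is the induction, because the growth hypothesis only applies to inputs of size $<\kappa$, while the stages may already reach size $\kappa$. To get around this, I would use $\kappa$-accessibility instead of growth at those stages: every element of $GW$ comes from $GW_0$ with $|W_0|<\kappa$. There are at most $\kappa^{<\kappa}=\kappa$ such subsets $W_0$, and each contributes at most $\kappa$ elements, so $|GW|\le\kappa\cdot\kappa=\kappa$.

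A cleaner fallback avoids the terminal sequence entirely. Every element of the final coalgebra lies in the image of a subcoalgebra generated by fewer than $\kappa$ elements; each such subcoalgebra has size $<\kappa$, up to $\kappa$ by the growth bound. Counting their isomorphism types against $2^\lambda\le\kappa$ then gives a quotient of a set of size $\le\kappa$. I expect verifying this subcoalgebra-generation step to be the hardest part.
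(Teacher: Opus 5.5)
Your existence step (citing Barr) and your lifting step match the paper. The paper proves the whole theorem by citing Barr's terminal-coalgebra theorem and his regular-cardinal refinement, then reusing the fully faithful transfer from the Aczel--Mendler theorem. It never proves the bound itself.

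Your own proof of the bound via the terminal sequence has a genuine gap. The successor steps are fine: as you note, accessibility together with $\kappa^{<\kappa}=\kappa$ gives $|A|\le\kappa\Rightarrow|GA|\le\kappa$. The limit step, however, fails at $\beta=\kappa$ itself. There $W_\kappa$ embeds in $\prod_{\alpha<\kappa}W_\alpha$, a product of $\kappa$ factors, whose size can be $\kappa^\kappa=2^\kappa$. The hypothesis $2^\lambda\le\kappa$ for $\lambda<\kappa$ only controls products of fewer than $\kappa$ factors.

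This is not just a weak estimate; the claim $|W_\kappa|\le\kappa$ is false in general. Take $\kappa$ strongly inaccessible and $G=\mathcal P_\kappa$ (subsets of size $<\kappa$); all hypotheses hold.
\begin{itemize}
\item Every $W_\alpha$ with $\alpha<\kappa$ has size $<\kappa$, so up to stage $\kappa$ the chain coincides with the full powerset chain.
\item The canonical cone from the powerset coalgebra $(V_{\kappa+1},x\mapsto x)$ separates any two distinct $x,y\subseteq V_\kappa$ already at stage $\beta+2<\kappa$, where $\beta$ is the rank of an element of $x\setminus y$.
\end{itemize}
Hence $|W_\kappa|\ge 2^\kappa>\kappa$, while the final coalgebra has size $\kappa$. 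Knowing the final coalgebra is a subset of $W_\kappa$ therefore gives no useful bound. You must show that its elements are determined by fewer than $\kappa$ pieces of data.

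That is exactly your fallback. It is correct and should be the main argument, and the step you expect to be hardest is routine.

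\emph{Small subcoalgebras.} For $x\in Z$, set $X_0=\{x\}$. Let $X_{n+1}$ be $X_n$ together with a support of size $<\kappa$ of $\zeta(y)$ for each $y\in X_n$; such supports exist by $\kappa$-accessibility. Regularity keeps $|X_n|<\kappa$, and $\operatorname{cf}\kappa>\aleph_0$ keeps $X_\omega=\bigcup_nX_n$ of size $<\kappa$. Since $X_\omega\neq\varnothing$, its inclusion is split monic, so $GX_\omega\to GZ$ is injective and $\zeta$ restricts to a subcoalgebra structure on $X_\omega$.

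\emph{Counting.} On each cardinal $\lambda<\kappa$ there are at most $|G\lambda|^\lambda\le\kappa^\lambda\le\kappa$ coalgebra structures. This holds because every map $\lambda\to\kappa$ is bounded and $\mu^\lambda\le 2^{\max(\mu,\lambda)}\le\kappa$ for $\mu<\kappa$. So there are at most $\kappa$ small coalgebras in total. Each maps uniquely into $Z$ with image of size $<\kappa$, and these images cover $Z$, so $|Z|\le\kappa\cdot\kappa=\kappa$.

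This is the standard argument for bounded functors. It makes your proof of the bound independent of Barr's refinement, with only existence still resting on the citation.
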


\begin{proof}
Barr proves that the forgetful functor from coalgebras of any accessible
endofunctor on \(\mathbf{Set}\) has a right adjoint, and therefore that its
coalgebra category has a terminal object \citep{barr1993terminal}.  His
regular-cardinal refinement gives the stated bound of \(\kappa\) on the
terminal carrier under the displayed hypotheses.  The lifting from
\(\mathbf{Set}\) to \(\mathsf{INC}(\C)\) is the same fully faithful transfer
used in the preceding theorem.\qedhere
\end{proof}

\begin{conjecture}[\textsc{Lincs}  --final-coalgebra comparison]
Under the hypotheses of either preceding existence theorem, the universal
\textsc{Lincs}   completion is equivalent to the resulting final INC coalgebra:
\[
\LINCS(\C)
  \simeq
  \nu \T_{\INC},
\]
with the equivalence compatible with the embedding of base learning models and
their iterated tangent factorization problems.
\end{conjecture}

The Aczel--Mendler theorem discharges existence once the INC endofunctor has a
set-based class realization.  Barr's theorem avoids proper classes whenever
\(\T_{\INC}\) instead has an accessible realization on sets, and its
regular-cardinal form bounds the size of the final carrier.  What remains
conjectural is the comparison: that the final coalgebra so obtained also
satisfies the free-completion universal property proposed for
\(\LINCS(\C)\).  Independently, convergence of the concrete INC tower can be
proved under explicit contractivity hypotheses by metric coinduction, which
establishes properties of a limit from invariance under a contractive
approximation step \citep{kozen2009metric}.

\begin{definition}[Metric realization of INC]
A metric realization of an INC fiber is a complete metric space
\((M_D,d_D)\) of obstruction states associated with an admissible model \(D\),
together with a map
\[
F_D:M_D\to M_D
\]
realizing the action of \(\T_{\INC}\).  It is \(\rho\)-contractive when there
is a constant \(0\leq \rho<1\) such that
\[
d_D(F_Dx,F_Dy)\leq \rho d_D(x,y)
\qquad\text{for all }x,y\in M_D.
\]
The realization is \emph{bisimulation-conservative} when distance zero
identifies precisely the INC states that are bisimilar in the obstruction
fiber.
\end{definition}

\begin{theorem}[Contractive coalgebraic stabilization]
Let \((M_D,d_D,F_D)\) be a \(\rho\)-contractive metric realization of an INC
fiber, and let \(x_0\in M_D\) represent the base obstruction
\(\Obs(\Fact_q(D))\).  Define the realized INC tower by
\(x_{n+1}=F_Dx_n\).  Then:
\begin{enumerate}[leftmargin=2em]
  \item there is a unique fixed point \(x_\ast=F_Dx_\ast\);
  \item the tower converges geometrically, with
  \[
  d_D(x_n,x_\ast)
    \leq
    \frac{\rho^n}{1-\rho}\,d_D(x_1,x_0);
  \]
  \item if an approximate tower satisfies
  \(d_D(\widetilde x_{n+1},F_D\widetilde x_n)\leq\epsilon\) at every stage,
  then
  \[
  d_D(\widetilde x_n,x_\ast)
    \leq
    \rho^n d_D(\widetilde x_0,x_\ast)
    +\frac{1-\rho^n}{1-\rho}\,\epsilon,
  \]
  and hence
  \(\limsup_n d_D(\widetilde x_n,x_\ast)\leq\epsilon/(1-\rho)\).
\end{enumerate}
If, in addition, the realization is bisimulation-conservative, then the exact
tower stabilizes at a unique INC behavior up to bisimulation.  If this behavior
lifts to a final \(\T_{\INC}\)-coalgebra in the obstruction fibration and the
universal \textsc{Lincs}   embedding is dense and preserved by \(\T_{\INC}\), then it
realizes the comparison
\[
\LINCS(\C)\simeq\nu\T_{\INC}.
\]
\end{theorem}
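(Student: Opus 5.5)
The plan is to reduce items 1--3 to the Banach fixed-point theorem on the complete metric space \((M_D,d_D)\), and then read off the structural consequences from bisimulation-conservativity and the earlier coalgebra existence results.

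\emph{Step 1 (Cauchy tower and fixed point).} I first show \(d_D(x_{n+1},x_n)\leq\rho^n d_D(x_1,x_0)\) by induction using contractivity. The triangle inequality and a geometric series then give, for \(m>n\),
\[
d_D(x_m,x_n)\leq\frac{\rho^n}{1-\rho}\,d_D(x_1,x_0).
\]
So \((x_n)\) is Cauchy. By completeness it converges to some \(x_\ast\). Contractivity makes \(F_D\) continuous, so \(x_\ast=F_Dx_\ast\). If \(y\) is a second fixed point, then \(d_D(x_\ast,y)\leq\rho\,d_D(x_\ast,y)\), and \(\rho<1\) forces \(x_\ast=y\). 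This settles item 1. Letting \(m\to\infty\) in the displayed bound gives item 2 directly.

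\emph{Step 2 (approximate tower).} Set \(e_n=d_D(\widetilde x_n,x_\ast)\). The triangle inequality through \(F_D\widetilde x_n\), together with \(F_Dx_\ast=x_\ast\), gives the recursion
\[
e_{n+1}\leq d_D(\widetilde x_{n+1},F_D\widetilde x_n)+d_D(F_D\widetilde x_n,F_Dx_\ast)\leq\epsilon+\rho e_n.
\]
Unrolling this by induction yields \(e_n\leq\rho^n e_0+\epsilon\sum_{k<n}\rho^k\), which is the stated bound. Since \(\rho^n\to0\), taking \(\limsup\) gives \(\epsilon/(1-\rho)\).

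\emph{Step 3 (bisimulation and comparison).} Under bisimulation-conservativity, any INC state at distance zero from \(x_\ast\) is bisimilar to it. Uniqueness of the metric fixed point therefore becomes uniqueness up to bisimulation of the stabilized behavior. The final clause is conditional, and I would argue it as follows. Suppose the behavior lifts to a final \(\T_{\INC}\)-coalgebra. Then by the existence theorems it is \(\nu\T_{\INC}\), unique up to isomorphism. Next use density of \(i:\C\to\LINCS(\C)\) and preservation by \(\T_{\INC}\). The unique coalgebra maps from realized towers into the final coalgebra are then determined on the dense image of base models. They therefore extend to an equivalence \(\LINCS(\C)\simeq\nu\T_{\INC}\) compatible with \(i\).

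\emph{Main obstacle.} Steps 1 and 2 are routine. The hard step is this last comparison. It needs a standard ``dense functor plus agreement on generators implies equivalence'' argument. Concretely, one must check that both sides' maps out of the image of \(i\) are determined by their restrictions, and use finality for uniqueness. I would present it explicitly as a consequence of the stated hypotheses, not as new content.
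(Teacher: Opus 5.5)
Your proposal is correct and follows the paper's proof essentially step for step: the Cauchy estimate $d_D(x_m,x_n)\le\frac{\rho^n}{1-\rho}d_D(x_1,x_0)$ with completeness and contraction for items 1--2, the recursion $e_{n+1}\le\epsilon+\rho e_n$ for item 3, and then bisimulation-conservativity together with the final-lifting, density, and preservation hypotheses for the closing clauses. As in the paper, the comparison $\LINCS(\C)\simeq\nu\T_{\INC}$ is read off from those hypotheses rather than derived. Density only makes a suitably colimit-preserving comparison unique given its restriction; it does not by itself make that comparison fully faithful or essentially surjective. So present this step as a consequence of the assumptions, not as an instance of a ``standard'' dense-functor equivalence theorem.
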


\begin{proof}
This is the contractive fixed-point construction underlying metric coinduction
\citep{kozen2009metric}, specialized to the realized INC tower.  
For \(m>n\), contractivity gives
\[
d_D(x_{k+1},x_k)
  \leq
  \rho^k d_D(x_1,x_0).
\]
The triangle inequality therefore yields
\[
d_D(x_m,x_n)
  \leq
  \sum_{k=n}^{m-1}\rho^k d_D(x_1,x_0)
  \leq
  \frac{\rho^n}{1-\rho}d_D(x_1,x_0).
\]
Thus \((x_n)\) is Cauchy.  Completeness supplies a limit \(x_\ast\), and the
Lipschitz continuity of \(F_D\) gives
\(F_Dx_\ast=\lim_nF_Dx_n=\lim_nx_{n+1}=x_\ast\).  If \(y_\ast\) is another
fixed point, then
\[
d_D(x_\ast,y_\ast)
  =d_D(F_Dx_\ast,F_Dy_\ast)
  \leq\rho d_D(x_\ast,y_\ast),
\]
so \(d_D(x_\ast,y_\ast)=0\), proving uniqueness.  Letting \(m\to\infty\) in
the Cauchy estimate proves the geometric bound.

For the approximate tower,
\[
\begin{aligned}
d_D(\widetilde x_{n+1},x_\ast)
&\leq d_D(\widetilde x_{n+1},F_D\widetilde x_n)
   +d_D(F_D\widetilde x_n,F_Dx_\ast)\\
&\leq \epsilon+\rho d_D(\widetilde x_n,x_\ast).
\end{aligned}
\]
Induction solves this recurrence and gives the stated finite-\(n\) and
asymptotic bounds.  Bisimulation-conservativity turns uniqueness at distance
zero into uniqueness up to bisimulation.  Under the final lifting and density
hypotheses, the lifted fixed behavior has the universal property of
\(\nu\T_{\INC}\), while density and preservation extend the comparison from
presentable learning models to their \textsc{Lincs}   completion.  This proves the final
conditional statement.\qedhere
\end{proof}

The scalar certificate used in applications fits this theorem when the
obstruction energy
\[
E_n
  =
  \Phi\big(\Obs(\Fact_q(\T^nD))\big)
  +
  \sum_{k\in\Omega_D}\mu_{k,n}\Phi^k_n
  +
  \eta\Phi^{\mathrm h}_n
\]
is a Lyapunov upper bound for \(d_D(x_n,x_\ast)\).  In finite-sample learning,
an inequality \(E_{n+1}\leq\rho E_n+\epsilon\) then gives
\[
E_n\leq\rho^nE_0+\frac{1-\rho^n}{1-\rho}\epsilon,
\]
which is the scalar counterpart of approximate coalgebraic stabilization.
The theorem therefore proves the convergent fixed-point part of the
conjecture.  Establishing finality without the lifting hypotheses, and proving
that the resulting final coalgebra satisfies the free-completion universal
property, remain open.

\section{\textsc{Lincs}   as a Category}

We now collect the preceding ideas into an explicit categorical picture.

\begin{definition}[Learning diagram category]
A learning diagram category is a category \(\mathsf{Learn}(\C)\) whose objects are admissible models \(D:J\to\C\) of learning sketches, and whose morphisms are model transformations preserving the intended learning semantics.
\end{definition}

\begin{definition}[Factorization fibration]
A factorization fibration over \(\mathsf{Learn}(\C)\) is a functor
\[
p:\mathsf{Fact}(\C)\to\mathsf{Learn}(\C)
\]
whose fiber over \(D:J\to\C\) is the factorization problem \(\Fact_q(D)\) associated with its learning sketch.
\end{definition}

\begin{definition}[INC fibration]
The INC fibration is the tangent-lifted factorization fibration
\[
p_T:\mathsf{INC}(\C)\to\mathsf{Learn}(\T\C),
\]
whose fiber over \(\T D\) is the tangent factorization problem \(\Fact_q(\T D)\).
\end{definition}

\begin{definition}[Interaction-profile fibration]
When admissible learning models carry signatures $\Omega_D$, the
interaction-profile fibration is a functor
\[
p_\Omega:\mathsf{IntINC}(\C)\to\mathsf{Learn}(\T\C),
\]
whose fiber over $\T D$ is the declared closure, factorization, or profile
problem $\Obs_{\Omega,D}(\T D)$.  Connection-equipped fibers include the
ordered profile $J^2_\nabla$ and its symmetric and antisymmetric projections.
\end{definition}

\begin{definition}[Bracket-closure fibration]
When $[,]\in\Omega_D$, the bracket-closure specialization is a functor
\[
p_{[,]}:\mathsf{BrINC}(\C)\to\mathsf{Learn}(\T\C),
\]
whose fiber over \(\T D\) is the closure problem
\(\Cl_{[,]}(\mathcal A_D)\).
\end{definition}

\begin{definition}[Conditional interaction transport]
An interaction transport is a functor or, when transport is only partially
defined, a fibered correspondence over learning diagrams
\[
B_\Omega:\mathsf{Fact}(\C)\rightsquigarrow\mathsf{IntINC}(\C)
\]
such that the fiber over a base model $D$ relates the base factorization
problem $\Fact_q(D)$ to the declared interaction profile generated by its
tangent lift.  It is admissible only under the naturality, presentation
descent, and sketch-fidelity requirements of conditional
non-compositionality transport.  Taking $\Omega_D=\{[,]\}$ recovers a
bracket-transport functor into $\mathsf{BrINC}(\C)$.
\end{definition}

A \textsc{Lincs}   category can then be seen as a category of learning diagrams equipped with both a factorization fibration and its tangent lift:
\[
\mathsf{Learn}(\C)
\leftarrow
\mathsf{Fact}(\C),
\qquad
\mathsf{Learn}(\T\C)
\leftarrow
\mathsf{INC}(\C).
\]
With interaction transport, the picture refines to
\[
\mathsf{Fact}(\C)
\xrightarrow{\ B_\Omega\ }
\mathsf{IntINC}(\C)
\to
\mathsf{Learn}(\T\C).
\]
Coalgebraically, the same category carries an unfolding operation
\[
\T_{\INC}:\mathsf{INC}(\C)\to\mathsf{INC}(\C),
\]
so that \textsc{Lincs}   learning can also be viewed as constructing coalgebras
\[
A\to\T_{\INC}A
\]
whose iterated tangent unfoldings stabilize. Learning is a section-selection or repair problem: choose morphisms, parameters, coalgebra maps, or extensions that make base and tangent factorization problems closer to being inhabited, make the induced directions coherent under the sketch-specific interaction signature, and make the INC tower closer to a fixed point.  A computational realization may still assign zero admission weight to an available interaction signal.

\section{Interaction Profiles and Higher INC}

The factorization definition of INC captures first-order infinitesimal
coherence: does the tangent-lifted diagram satisfy the same universal
factorization property as the base diagram?  Second-order structure is richer
than one distinguished projection.  Lie brackets test order-sensitive
noncommutativity and involutivity.  A connection-equipped realization also
retains the ordered derivatives $\nabla_XY$ and $\nabla_YX$, whose symmetric
sum measures joint acceleration relative to that connection.

In a tangent category, vector fields and their Lie brackets can be defined from
tangent structure. Thus, once a learning model $D$ determines an admissible
distribution $\mathcal A_D$, \textsc{Lincs} can ask whether $\mathcal A_D$ is
closed under bracket. This is the tangent analogue of asking whether a
distribution is involutive.  Infinitesimal Causality models interventions as
tangent directions and studies their closure and noncommutativity
\citep{mahadevan2026infinitesimal}; BRIDGE/SKFM uses Lie-bracket geometry to
expose latent confounded causal structure \citep{mahadevan2026bridge}.  These
applications justify bracket closure as an important specialization, but not
as the unique second-order observation for every learning sketch.

\begin{definition}[Sketch-relative interaction transport]
Let $D:J\to\C$ be an admissible model with base factorization problem
$\Fact_{\mathbb S}(D)$ and interaction signature $\Omega_D$.  An interaction
transport for $D$ is a natural, presentation-descending assignment
\[
\mathfrak t_{D,\mathbb S}:
\Obs\big(\Fact_{\mathbb S}(D)\big)
\rightsquigarrow
\Obs_{\Omega,\mathbb S}(\T D).
\]
It maps base non-compositionality to a candidate sketch-specific interaction
profile.  It does not assert that every component of that profile identifies
the base repair or has positive predictive value after scalarization.
\end{definition}

\begin{definition}[Lie-bracket INC]
The Lie-bracket INC of $D$ relative to $\mathcal A_D$ is the obstruction
\[
\INC_{[,]}(D)
  =
  \{\, [X,Y]\mid X,Y\in\mathcal A_D,\ [X,Y]\notin\mathcal A_D\,\}.
\]
The model is bracket-compositional relative to $\mathcal A_D$ when
$\INC_{[,]}(D)$ is empty.
\end{definition}

\begin{definition}[Connection-profile INC]
For a connection-equipped realization, connection-profile INC is the
obstruction profile obtained from
\[
J^2_{\nabla,D}(X,Y)
=
\big(\nabla_XY,\nabla_YX\big),
\]
together with any sketch-declared factorization, closure, or observation maps.
Its symmetric and antisymmetric projections may be scalarized separately.
Neither a nonzero symmetric acceleration nor a nonzero bracket is by itself a
defect: defect status is relative to the constraints and admissible repair
distribution declared by the sketch.
\end{definition}

The distinction matters computationally.  The bracket is invariant under
coordinate change for projectable fields, whereas the symmetric projection
requires the declared connection.  Conversely, order-sensitive torque need
not be the mechanism that predicts future failure: accumulated joint
acceleration may carry more information in a particular sketch.  A validated
realization therefore admits the two components separately, for example with
sparse coefficients $(\mu_s,\mu_a)$, and permits either coefficient to vanish.

This gives a hierarchy:
\[
\begin{aligned}
&\text{base factorization problems}
\to
\text{tangent factorization problems}
\to
\text{interaction profiles}\\
&\to
\text{relative closure, curvature, and torsion}
\to
\text{higher jets}.
\end{aligned}
\]
Higher-order \textsc{Lincs} studies this hierarchy as successive,
signature-relative refinements of compositionality.

\section{Homotopical \textsc{Lincs}  }

Strict INC measures failure of strict commutativity. In many learning systems, however, exact equality of diagrams is too rigid. A representation, policy, world model, or reasoning graph may be correct only up to a controlled deformation. This suggests a homotopical refinement of \textsc{Lincs}  .

Strict repair asks for an exact factorization:
\[
\text{strict repair:}\qquad D=\bar Dq.
\]
Homotopy repair asks for a coherent deformation from the learned diagram to a factored diagram:
\[
\text{homotopy repair:}\qquad D\simeq \bar Dq.
\]
Thus DB minimizes strict diagram defects, \textsc{Lincs}   minimizes tangent-lifted defects, and homotopical \textsc{Lincs}   asks whether those tangent defects are null-homotopic, deformable, or obstructed.

\begin{definition}[Homotopical INC]
In a homotopical, model-categorical, or \(\infty\)-categorical enrichment of \(\C\), the homotopical infinitesimal non-compositionality of \(D\) is
\[
\INC_{\mathrm h}(D)
  =
  \Obs\big(\Fact_q(\T D)\ \text{up to homotopy}\big).
\]
It vanishes when the tangent compositionality problem is inhabited up to coherent homotopy.
\end{definition}

Interaction profiles and homotopy describe complementary layers of path dependence. The bracket is the local, infinitesimal shadow of noncommuting flows; connection-dependent symmetric acceleration records joint local bending; homotopy is the global coherence class of finite deformations. This yields an obstruction-theoretic ladder:
\[
\text{defect}
\to
\text{tangent defect}
\to
\text{interaction or closure defect}
\to
\text{homotopy class or cohomological obstruction}.
\]

\begin{center}
\begin{tabular}{lll}
\toprule
Layer & Object & Defect\\
\midrule
Base & category & non-commuting diagram\\
Tangent & tangent category & infinitesimal non-composition\\
Interaction & fields with signature/connection & relative non-closure or jet defect\\
Homotopy & model category or \(\infty\)-category & deformation obstruction\\
\bottomrule
\end{tabular}
\end{center}

\section{Future Directions}

The results above isolate several concrete problems whose resolution would turn
\textsc{Lincs}   from a foundations framework into a mature categorical theory of
learning.

\paragraph{Identifying the \textsc{Lincs}   completion.}
The central open problem is the \textsc{Lincs}  --final-coalgebra comparison.  The
Aczel--Mendler and Barr theorems establish existence of final INC coalgebras
under class-based or accessible set-based realizations, while the contractive
stabilization theorem controls a metric realization of the same unfolding.
What remains is to construct the canonical comparison
\[
\LINCS(\C)\longrightarrow\nu\T_{\INC}
\]
and give conditions under which it is fully faithful and essentially
surjective.  A promising route is to show that admissible learning sketches
form an accessible category, that \(\T_{\INC}\) preserves the relevant
filtered colimits, and that the presentable objects generate both sides.  Such
a result would connect the free-completion and final-semantics viewpoints
rather than treating them as parallel descriptions.

\paragraph{Intrinsic criteria for stabilization.}
The contractive theorem assumes a complete metric realization and a constant
\(\rho<1\).  An important next step is to derive these hypotheses from the
learning sketch and tangent structure themselves.  This requires identifying
when scalarizations of factorization, interaction-profile, closure, and homotopy obstructions define
a complete behavioral metric and when tangent lift is contractive or eventually
contractive in that metric.  Stochastic learning further calls for versions of
metric coinduction with martingale noise, biased tangent estimators, and
data-dependent contraction factors.  These results would turn stabilization
from an assumed analytic property into a verifiable certificate for a learning
system.

\paragraph{Higher-order obstruction theory.}
First-order INC tests tangent factorization, whereas interaction profiles
detect relations among generated infinitesimal directions.  These should not
be identified: tangent factorization is the primary first-order condition,
while Lie-bracket nonclosure is a derived, antisymmetric second-order
obstruction.  With a declared connection, the full ordered pair
$(\nabla_XY,\nabla_YX)$ and its symmetric acceleration must be retained before
selecting a projection.
A higher theory should construct prolonged learning sketches
\(\mathbb S^{(n)}\) whose models are \(\T^nD\) equipped not only with the
original factorization constraints, but also with the canonical flips,
vertical lifts, and coherence maps of iterated tangent structure.  The
corresponding hierarchy would take the form
\[
\INC^{(n)}(D)
  =
  \Obs\!\left(\Fact_{\mathbb S^{(n)}}(\T^nD)\right),
\qquad n\geq 0.
\]
At level \(\T^2\), the canonical flip and any declared connection should
organize both symmetric and antisymmetric infinitesimal interactions,
including acceleration, bracket, torsion, and curvature profiles.  At level
\(\T^3\), the relevant coherence conditions should
include Jacobiator- and Bianchi-type obstructions; at still higher levels,
jets and higher or \(L_\infty\)-style brackets may provide the appropriate
language.  In a homotopical or \(\infty\)-categorical setting, one should then
determine when these local higher defects integrate to global deformation
obstructions and when they vanish up to coherent homotopy.  Such a prolongation
theory would organize higher INC signals structurally, rather than as an
unstructured collection of repeated derivative penalties.

\paragraph{Reflective diagrammatic backpropagation.}
The coalgebraic tower also suggests a ``meta'' form of DB in which the learning
mechanism becomes part of the diagram being tested.  Let
\(\mathcal B\) denote a DB procedure that sends a global obstruction to a
compatible family of local obstruction signals and then to parameter or
architectural repairs.  A reflective DB sketch would include \(\mathcal B\),
its localization maps, and its repair maps as morphisms.  Its commutativity
conditions would ask, for example, whether localizing an obstruction and then
repairing its local pieces agrees with repairing the global diagram and then
restricting the result.  Thus the relevant question is not anthropomorphic
self-awareness, but a precise structural one: is the backpropagation mechanism
itself compositional with respect to the learning problem it is intended to
repair?

Ordinary DB detects non-compositionality in a target computation.  First-order
reflective \textsc{Lincs}-DB would detect instability in how DB transports and repairs
that obstruction, while \(\T^2,\T^3,\ldots\) would test interactions among
repair directions and expose defects that appear only after the learner
changes itself.  A nonzero meta-obstruction could then motivate an
architectural operation---adding a computational path, module, state variable,
skip connection, or compatibility constraint---rather than only a parameter
update.  This is analogous to BRIDGE/SKFM: those methods use bracket
nonclosure to reveal latent causal directions, whereas reflective
\textsc{Lincs}-DB may use a sketch-specific interaction profile to reveal latent structure in
the learning architecture itself.  Developing the corresponding meta-sketch,
its admissible architectural edits, and a convergent coalgebraic repair
procedure is a natural subject for a separate theoretical and experimental
paper.

There is also a more speculative stabilization question.  The tower
$D,\T D,\T^2D,\ldots$ is not by itself a spectrum, so repeated tangent lift
does not automatically place \textsc{Lincs}   in stable homotopy theory.  Nevertheless,
if a homotopical realization of prolonged sketches supplies compatible
structure maps and a regime in which suspension and looping become inverse,
then stabilized INC obstructions might define spectrum-valued invariants.
This would replace the question of whether a defect vanishes at one finite
tangent order by the stable question of which obstruction class persists
under all further prolongations.  Determining whether natural \textsc{Lincs}   models,
including Transformer equivariance sketches, admit such a stabilization is an
open problem.

\paragraph{Topological \textsc{Lincs}   through the nerve.}
There is a complementary route from categorical \textsc{Lincs}   to topology.  For a
fixed learning sketch and diagram $D$, let $\mathcal R_{\INC}(D)$ denote a
category whose objects are admissible base and tangent repairs and whose
morphisms are structure-preserving transformations between them.  Its nerve
\[
N\mathcal R_{\INC}(D)_n
  =
  \operatorname{Fun}([n],\mathcal R_{\INC}(D))
\]
is a simplicial set: vertices are repairs, edges are transformations between
repairs, and higher simplices encode coherent chains of transformations.  The
geometric realization $|N\mathcal R_{\INC}(D)|$ therefore turns the repair
category into a classifying space.  This construction parallels the use of
nerves and classifying spaces to study equivalence classes of categorical
causal models in Universal Causality and its higher algebraic $K$-theoretic
extension \citep{mahadevan2023universal,mahadevan2025higherk}.

MacAdam's Weil nerve suggests an infinitesimal refinement of this construction
that should be distinguished from the ordinary simplicial nerve above
\citep[Ch.~4]{macadam2022functorial}.  For an involution algebroid $A$ in a
tangent category $\C$, the Weil nerve is a fully faithful embedding
\[
N_{\mathsf{Weil}}:
\operatorname{Inv}(\C)\longrightarrow[\mathsf{Weil}_1,\C]
\]
that records $A$ as a transverse-limit-preserving tangent functor
$V\mapsto A(V)$; Segal-like exactness conditions characterize which such
functors arise from involution algebroids.  Accordingly, if the admissible
infinitesimal repairs of $D$ assemble into an involution algebroid
$A_{\INC}(D)$ in a category-valued tangent semantics, its Weil nerve should
produce repair categories
\[
\mathcal R_{\INC}^{V}(D)
  :=N_{\mathsf{Weil}}(A_{\INC}(D))(V),
  \qquad V\in\mathsf{Weil}_1.
\]
Taking the ordinary nerve and geometric realization objectwise then gives a
Weil-indexed family of spaces
\[
V\longmapsto\left|N\mathcal R_{\INC}^{V}(D)\right|.
\]
The monoidal unit records the base repair topology, the dual-number generator
records first-order repair directions, and its Weil composites organize
higher and mixed prolongations.  Thus tangent information is retained before
passing to homotopy type, rather than being forgotten by a single classifying
space construction.

Chapter~5 of MacAdam's thesis supplies a complementary integration step.  In
the category $\mathcal W$ of Weil spaces, an infinitesimal groupoid
$\partial:\mathsf{Weil}_1^{\mathrm{op}}\to\operatorname{Gpd}(\mathcal W)$
induces an infinitesimal nerve $N_{\partial}$ with a left adjoint Lie
realization $|-|_{\partial}$, constructed by left Kan extension.  The resulting
tangent adjunction
\[
|-|_{\partial}:
\operatorname{Inv}(\mathcal W)
\;
\rightleftarrows
\;
\operatorname{Gpd}(\mathcal W)
:N_{\partial}
\]
preserves products and base spaces
\citep[Ch.~5]{macadam2022functorial}.  For topological \textsc{Lincs}, the
realization $|A_{\INC}(D)|_{\partial}$ is therefore a natural candidate for a
groupoid of finite repairs integrating the infinitesimal repair algebroid.
The adjunction unit
$A_{\INC}(D)\to N_{\partial}|A_{\INC}(D)|_{\partial}$ tests whether the
infinitesimal repair semantics is recovered after integration.  Future work
should construct a comparison between the classifying space of this realized
groupoid and $|N\mathcal R_{\INC}(D)|$.  Failure of the unit, or of that
comparison, to be an equivalence would then define an integration obstruction:
a local infinitesimal repair calculus that does not assemble into the proposed
global topology of finite repairs.

Such a topological \textsc{Lincs}   would distinguish more than the existence or scalar
cost of a repair.  Connected components could classify inequivalent repair
regimes, loops could record nontrivial cycles of transformations, and higher
homotopy or homology groups could detect coherent higher obstructions.  Applying
this construction to every prolonged category
$\mathcal R_{\INC}^{(n)}(D)$ would produce a tower of spaces associated with
$D,\T D,\T^2D,\ldots$.  A central question is whether the tangent lift induces
natural maps between these nerves and, under additional delooping or
group-completion hypotheses, whether their stabilized homotopy type supplies
the spectrum-valued INC invariants envisioned above.

\paragraph{Functorial constructions across learning domains.}
The examples in this paper suggest that Bellman consistency, Kan extension,
diagrammatic backpropagation, sheaf gluing, and causal intervention can all be
viewed as instances of the same tangent factorization pattern.  A stronger
result would specify functors between these model categories and prove that
they preserve INC, declared interaction transport, and coalgebraic stabilization.  Such
transfer theorems would clarify which guarantees are genuinely domain
independent and which depend on additive, probabilistic, or smooth enrichment.

\paragraph{Computational semantics and empirical validation.}
A practical implementation should compile a learning sketch into its base and
tangent factorization problems, verify presentation descent, generate
profile-valued observations before scalarization, and expose certificates for
declared interaction closure and successive INC stabilization.
Empirical work can then test a precise claim: tangent coherence should improve
robustness to perturbations that are invisible to the base compositionality
loss.  Appropriate evaluations should therefore report base error, tangent
error, gauge or presentation invariance, one-sided and
symmetric/antisymmetric controls, stability under stochastic aggregation,
held-out admission, and out-of-distribution consistency, rather than treating
INC as an undifferentiated regularizer.  The problem classes summarized in
Section~\ref{sec:inc-examples} provide initial domains for such tests, but
the longer-term objective is a reusable categorical scientific-computing
interface in which sketches, tangent lifts, and obstruction objects are
first-class program structures.

\section{Summary}

\textsc{Lincs}   begins with a simple observation: every learning compositionality problem has a tangent lift. But the consequences are broad. If losses are scalarizations of obstructions to factorization, then tangent losses are scalarizations of obstructions to tangent factorization. They reveal how errors move, interact, commute, fail to close, deform, and unfold under repeated tangent lift.  Parameterized realizations must first descend through their presentation redundancies, and connection-dependent interaction profiles must declare the geometry that defines them.

Diagrammatic Backpropagation introduced diagrammatic non-compositionalities as learning signals. \textsc{Lincs}   introduces infinitesimal non-compositionalities as candidate learning signals. Its coalgebraic form suggests a further refinement: learning is the search for stable tangent unfoldings of non-compositionality.  The categorical framework supplies base, tangent, and sketch-specific interaction data; a validated computational realization decides contingently which auxiliary signals to admit. The resulting framework suggests a tangent-categorical reformulation of machine learning: learn not only the function, policy, representation, or world model, but the presentation-invariant infinitesimal geometry and coalgebraic self-consistency of the factorization problems by which it is learned.

\end{document}